\documentclass{article}

\usepackage{microtype}
\usepackage{graphicx}
\usepackage{subcaption}
\usepackage{bm}
\usepackage{booktabs} % for professional tables

\usepackage{multirow}%
\usepackage{amsmath,amssymb,amsfonts}%
\usepackage{amsthm}%
\usepackage{mathrsfs}%
\usepackage{xcolor}%
\usepackage{textcomp}%
\usepackage{manyfoot}%
 \usepackage{algorithmicx}%
\usepackage{algpseudocode}%
\usepackage{listings}%

\usepackage{hyperref}

\usepackage[accepted]{icml2026}

\usepackage[capitalize,noabbrev]{cleveref}

\theoremstyle{plain}
\newtheorem{theorem}{Theorem}[section]
\newtheorem{proposition}[theorem]{Proposition}

\theoremstyle{definition}
\newtheorem{definition}[theorem]{Definition}
\newtheorem{assumption}[theorem]{Assumption}
\theoremstyle{remark}

\newcommand{\norm}[1]{\left\lVert#1\right\rVert}
\newcommand{\inner}[2]{\langle #1, #2 \rangle}

\usepackage[textsize=tiny]{todonotes}

\icmltitlerunning{Immunizing Large Vision-Language Models via Generative Semantic Antibodies for Open-World Trustworthiness}

\begin{document}

\twocolumn[
  \icmltitle{Immuno-VLM: Immunizing Large Vision-Language Models via Generative Semantic Antibodies for Open-World Trustworthiness}

  % It is OKAY to include author information, even for blind submissions: the
  % style file will automatically remove it for you unless you've provided
  % the [accepted] option to the icml2026 package.

  % List of affiliations: The first argument should be a (short) identifier you
  % will use later to specify author affiliations Academic affiliations
  % should list Department, University, City, Region, Country Industry
  % affiliations should list Company, City, Region, Country

  % You can specify symbols, otherwise they are numbered in order. Ideally, you
  % should not use this facility. Affiliations will be numbered in order of
  % appearance and this is the preferred way.
  \icmlsetsymbol{equal}{*}

 \begin{icmlauthorlist}
    \icmlauthor{Xiang Fang}{yyy}
    \icmlauthor{Wanlong Fang}{comp}
    % \icmlauthor{Firstname3 Lastname3}{comp}
    % \icmlauthor{Firstname4 Lastname4}{sch}
    % \icmlauthor{Firstname5 Lastname5}{yyy}
    % \icmlauthor{Firstname6 Lastname6}{sch,yyy,comp}
    % \icmlauthor{Firstname7 Lastname7}{comp}
    % %\icmlauthor{}{sch}
    \icmlauthor{Wei Ji}{sch}
    % \icmlauthor{Firstname8 Lastname8}{yyy,comp}
    %\icmlauthor{}{sch}
    %\icmlauthor{}{sch}
  \end{icmlauthorlist}

  \icmlaffiliation{yyy}{School of Software Engineering, Huazhong University of Science and Technology}
  \icmlaffiliation{comp}{Nanyang Technological University, Singapore}
  \icmlaffiliation{sch}{Nanjing University}

  \icmlcorrespondingauthor{Wanlong Fang}{wanlongfang@gmail.com}

  % You may provide any keywords that you find helpful for describing your
  % paper; these are used to populate the keywords metadata in the PDF but
  % will not be shown in the document
  \icmlkeywords{Machine Learning, ICML}

  \vskip 0.3in
]

% this must go after the closing bracket ] following \twocolumn[ ...

% This command actually creates the footnote in the first column listing the
% affiliations and the copyright notice. The command takes one argument, which
% is text to display at the start of the footnote. The \icmlEqualContribution
% command is standard text for equal contribution. Remove it (just {}) if you
% do not need this facility.

% Use ONE of the following lines. DO NOT remove the command.
% If you have no special notice, KEEP empty braces:
\printAffiliationsAndNotice{}  % no special notice (required even if empty)
% Or, if applicable, use the standard equal contribution text:
% \printAffiliationsAndNotice{\icmlEqualContribution}

\begin{abstract}
 Large Vision-Language Models have achieved unprecedented success in zero-shot recognition by aligning visual features with broad semantic concepts. However, this semantic abstraction creates a critical vulnerability in open-world deployment: the ``Hubris of Semantics'', where models force-fit unknown anomalies into known categories with high confidence due to the lack of explicit negative knowledge. To address this \textit{Open-World Trustworthiness Paradox}, we propose \textbf{Immuno-VLM}, a bio-inspired framework that adapts the biological principle of \textbf{Immunological Negative Selection} to high-dimensional latent spaces. Departing from traditional Open-Set Recognition  methods that rely on passive density estimation or inefficient pixel-space outlier generation, Immuno-VLM leverages the generative reasoning of Large Language Models  to actively hallucinate ``Semantic Antibodies'', textual descriptions of near-distribution outliers (e.g., look-alikes, contextual anomalies) that effectively bound the decision space of known classes.
% Our methodology unfolds in three rigorous phases: (1) \textbf{Antigen Profiling}, where we define robust class prototypes using Text-Regularized Von Mises-Fisher estimation to mitigate modality gaps; (2) \textbf{Generative Semantic Sampling}, where we employ an LLM as a ``Computational Thymus'' to generate hard negative concepts, curated by a novel \textbf{Active Antibody Density} filter to prevent auto-immune interference; and (3) \textbf{Vaccination}, where a lightweight Rejection Adapter is trained via a \textbf{Push-Pull Contrastive Loss} to maximize the margin between visual self-representations and the hallucinated antibody manifold. Theoretically, we derive the \textbf{Antibody Covering Bound}, proving that this semantic immunization strictly bounds the open space risk, and demonstrate via \textbf{Manifold Sample Complexity} analysis that targeting the low-dimensional semantic manifold breaks the curse of dimensionality that plagues traditional Artificial Immune Systems. 
Extensive experiments on ImageNet-1K and four challenging OOD benchmarks reveal that Immuno-VLM establishes a new state-of-the-art. 
% Extensive experiments on ImageNet-1K and four challenging OOD benchmarks (ImageNet-O, iNaturalist, SSB-Hard, Texture) reveal that Immuno-VLM establishes a new state-of-the-art. Specifically, it improves AUROC on the adversarial ImageNet-O benchmark by over \textbf{16.6\%} compared to zero-shot baselines and reduces safety-critical failures (FPR95) by nearly half, offering a scalable and rigorous paradigm for safe open-world perception.
\end{abstract}

\section{Introduction}
\label{sec:intro}

The paradigm of computer vision has been irrevocably altered by the emergence of Large Vision-Language Models (LVLMs), such as CLIP \cite{radford2021learning}, ALIGN \cite{jia2021scaling}, and LLaVA \cite{liu2024visual}. By scaling contrastive pre-training to billions of noisy image-text pairs, these foundation models have moved beyond the closed-set limitations of traditional supervised learning, acquiring a remarkable ability to generalize to novel visual concepts via natural language prompts \cite{bommasani2021opportunities}. This ``zero-shot'' capability suggests a future where AI systems can operate robustly in the unconstrained open world. However, as these models transition from academic benchmarks to safety-critical deployments, ranging from autonomous driving to medical diagnostics, a fundamental epistemic vulnerability has emerged: the \textit{Open-World Trustworthiness Paradox}.

\begin{figure}[t]
  \centering
  % \framebox{\parbox{0.9\textwidth}{\centering
  %   \vspace{2cm}
  %   \textbf{Figure 1: The Open-World Trustworthiness Paradox} \\
  %   \vspace{0.5cm}
  %   \textit{Left Panel:} A standard CLIP model confidently classifying a ``Robotic Dog'' (OOD) as a ``Golden Retriever'' (ID) with 98\% confidence due to feature overlap. \\
  %   \textit{Right Panel:} Immuno-VLM correctly rejecting the anomaly. The visualization shows the ``Antibody Shield'' blocking the assignment.
  %   \vspace{2cm}
  % }}
  \includegraphics[width=0.23\textwidth]{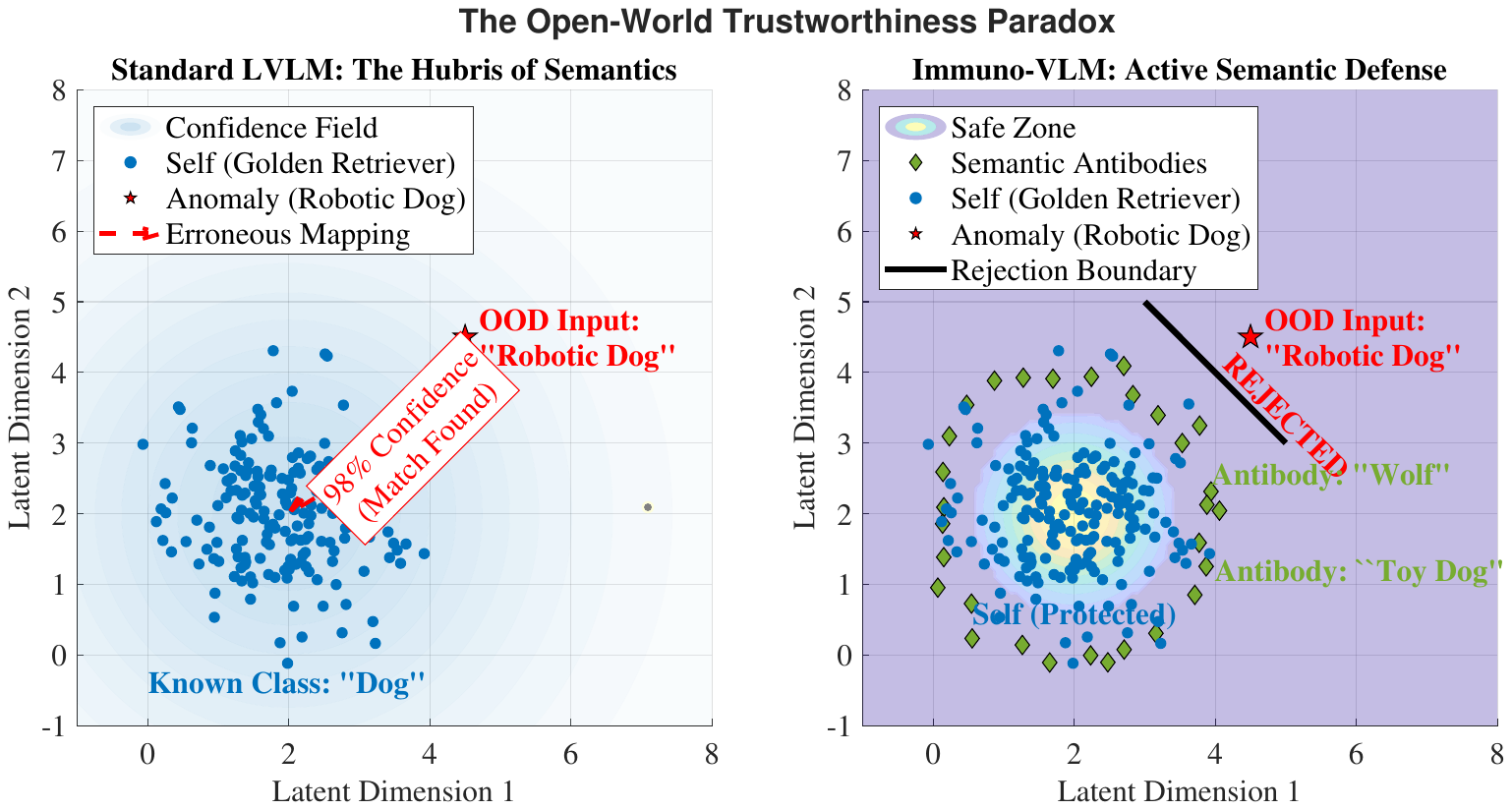} 
\hspace{-0.05in}
\includegraphics[width=0.23\textwidth]{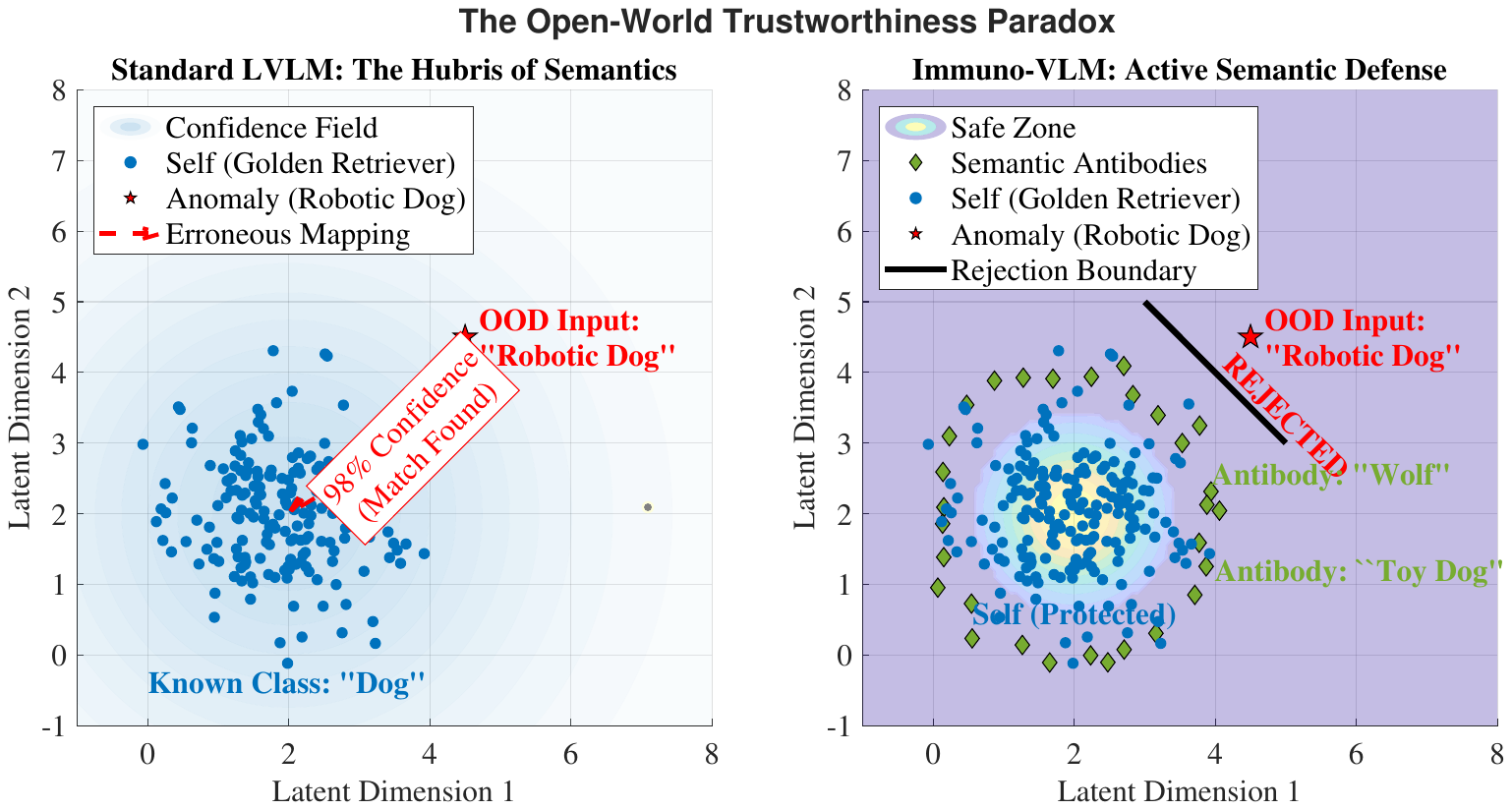} 
  \vspace{-5pt}
  \caption{\small The Core Concept. While standard LVLMs suffer from the ``Hubris of Semantics'', force-fitting anomalies into known classes, Immuno-VLM uses hallucinated semantic antibodies to actively define the boundary of the unknown.}
  \vspace{-13pt}
  \label{fig:teaser}
\end{figure}

While LVLMs demonstrate impressive robustness to covariate shifts (e.g., recognizing a sketch of a dog as easily as a photo) \cite{radford2021learning}, they exhibit a perilous fragility when distinguishing ``known'' concepts from ``unknown'' anomalies. We term this the ``Hubris of Semantics''. Because models like CLIP are trained to maximize the cosine similarity between visual embeddings and the nearest text concepts in a dense semantic manifold \cite{wang2020understanding}, they develop a retrieval bias that forces virtually any visual input into a known category. When confronted with an out-of-distribution (OOD) anomaly, such as a ``robotic dog'' or an adversarial texture \cite{hendrycks2021natural}, the model does not confess ignorance. Instead, it hallucinates a high-confidence match based on superficial feature overlaps, classifying the robot as a ``Golden Retriever'' with near-certainty. This behavior represents a regression from the principles of Open Set Recognition (OSR) established by  \cite{scheirer2012toward}, where the ability to reject the ``unknown'' is paramount.

Traditional defenses against Open Space Risk  have largely relied on discriminative thresholding. Methods such as Maximum Softmax Probability (MSP) \cite{hendrycks2016baseline}, Energy Scores \cite{liu2020energy}, and Activation Shaping \cite{djurisic2023ash} attempt to identify anomalies by analyzing the magnitude or distribution of activation vectors. While effective in low-dimensional spaces, these methods struggle in the high-dimensional latent spaces of Foundation Models due to the concentration of measure phenomenon \cite{wang2020understanding}, where the ``open space'' becomes vast and sparsely populated. More recent approaches like Maximum Concept Matching (MCM) \cite{ming2022delving} attempt to leverage the text modality, but they remain \textit{reactive}, waiting for an error to manifest as a statistical deviation, rather than \textit{proactive}. Furthermore, generative approaches that attempt to synthesize pixel-space outliers using GANs \cite{ge2017generative, neal2018open} face a combinatorial explosion when scaled to the diversity of ImageNet, rendering them computationally intractable.

\begin{figure}[t]
  \centering
  % \framebox{\parbox{0.9\textwidth}{\centering
  %   \vspace{2.5cm}
  %   \textbf{Figure 2: Biological vs. Computational Negative Selection} \\
  %   \vspace{0.5cm}
  %   \textit{Top (Biology):} The Thymus generates random T-cells. Those binding to Self-Antigens are destroyed (Negative Selection). Survivors recognize Non-Self. \\
  %   \textit{Bottom (Immuno-VLM):} The LLM (Computational Thymus) generates Semantic Antibodies. Those too close to Class Prototypes are filtered. Survivors define the OOD boundary.
  %   \vspace{2.5cm}
  % }}
    \includegraphics[width=\columnwidth]{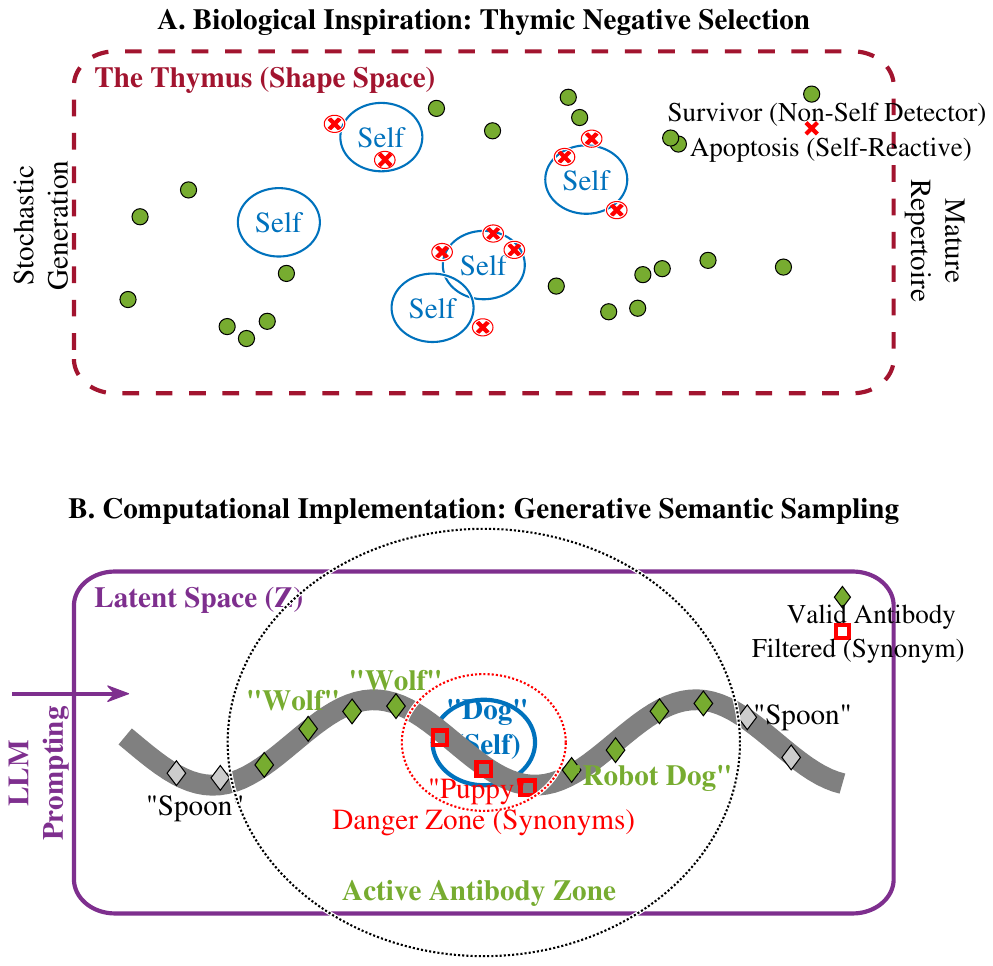}
    \includegraphics[width=\columnwidth]{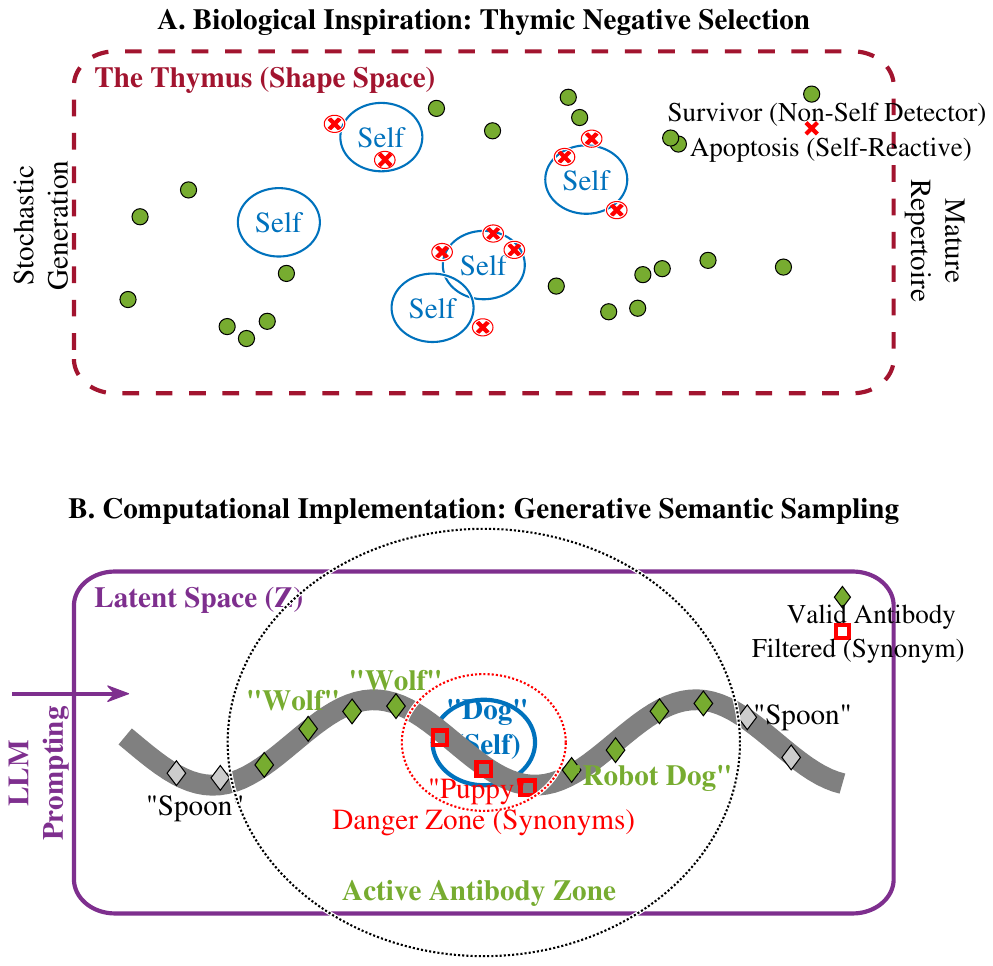}
        \vspace{-10pt}
  \caption{\small The Isomorphism between Biological Immunity and Immuno-VLM. We map T-cell generation to Antibody Hallucination and Thymic Selection to our Active Density Filter.}
  \label{fig:bio_analogy}
    \vspace{-13pt}
\end{figure}

To resolve this paradox, we look to the biological immune system, the only known learning system that successfully solves the open-world recognition problem at a planetary scale \cite{dasgupta1999artificial}. The vertebrate immune system protects the organism from a virtually infinite space of pathogens (viruses, bacteria, tumors), including those that have not yet evolved, without requiring prior exposure to them. It achieves this feat not by memorizing the external world, which is impossible, but by rigorously defining the ``Self''. Through a process called \textit{Negative Selection} in the thymus \cite{forrest1994self}, the body generates millions of random T-cell receptors. These candidates are tested against self-antigens; those that bind to ``Self'' are destroyed (clonal deletion), while the non-reactive survivors are released into the bloodstream. These survivors essentially define a ``Repertoire Hole'', a negative image of the Self \cite{jerne1974towards}. Consequently, if a T-cell binds to \textit{anything} in the periphery, the system can mathematically guarantee that the target is an anomaly (``Non-Self''), even if that specific pathogen has never been encountered before.

In this work, we propose \textbf{Immuno-VLM}, a framework that translates this biological principle into a rigorous computational algorithm for large-scale vision models. We address the ``Curse of Dimensionality'' that plagued early Artificial Immune Systems \cite{forrest1994self} by shifting the domain of negative selection from the pixel space to the \textit{Semantic Manifold}. Instead of generating random detectors, we leverage the reasoning capabilities of Large Language Models (LLMs) like GPT-4 \cite{vaswani2017attention} to act as a ``Computational Thymus''. The LLM actively hallucinates ``Semantic Antibodies'', textual descriptions of ``Near-OOD'' concepts (e.g., ``wolf'', ``coyote'', ``robot dog'' for the class ``dog'') and ``Contextual Anomalies'' (e.g., ``dog melting'', ``dog made of clouds''). These descriptions are projected into the shared embedding space to form a dense, semantically informed boundary around the known classes.

% Our methodology unfolds in three phases. 1) We perform \textbf{Antigen Profiling}, estimating robust prototypes for known classes using a Text-Regularized Von Mises-Fisher distribution \cite{davidson2018hyperspherical} to mitigate the modality gap \cite{liang2022mind}. 2) we employ \textbf{Generative Semantic Sampling}, using the LLM to generate hard negative concepts, which are then curated by a novel \textbf{Active Antibody Density} filter to prevent auto-immune interference (rejecting valid synonyms). 3) we implement \textbf{Vaccination}, training a lightweight Rejection Adapter via a \textbf{Push-Pull Contrastive Loss} \cite{khosla2020supervised} to maximize the margin between visual self-representations and the hallucinated antibody manifold. Theoretically, we derive an \textit{Antibody Covering Bound} (Theorem \ref{thm:antibody}), proving that this semantic immunization strictly bounds the open space risk, and we provide a \textit{Manifold Sample Complexity} analysis (Theorem \ref{thm:sample_complexity}) demonstrating that our semantic approach breaks the curse of dimensionality. Empirically, Immuno-VLM establishes a new state-of-the-art on the OpenOOD benchmark \cite{yang2022openood}, 
% improving AUROC on the adversarial ImageNet-O dataset \cite{hendrycks2021natural} by over 16\% compared to zero-shot baselines, 
% offering a scalable path toward trustworthy open-world perception.

Our contributions are threefold. \textbf{Methodologically}, we introduced the concept of \textbf{Generative Semantic Antibodies}, demonstrating that Large Language Models can serve as a ``Computational Thymus'', hallucinating a vast, diverse repertoire of near-distribution outliers that pixel-space generators cannot feasibly produce. \textbf{Theoretically}, we derived the \textit{Antibody Covering Bound} (Theorem \ref{thm:antibody}) and the \textit{Manifold Sample Complexity} analysis, providing the first rigorous mathematical proof that targeting the \textit{Semantic Manifold} breaks the Curse of Dimensionality that has historically plagued Artificial Immune Systems. We showed that the safety of a vision system is strictly bounded by the imagination (Wasserstein alignment) of its language generator. \textbf{Empirically}, Immuno-VLM achieved state-of-the-art performance on the demanding multiple challenging benchmarks, improving the detection of adversarial semantic shifts by over 16\% compared to standard zero-shot baselines, while preserving the integrity of in-distribution recognition.

\section{Related Work}
\label{sec:related}

As a challenging machine learning task \cite{liu2023exploring,wang2025taylor,fang2026towardsicml,kuai2026dynamic,wang2025point,fang2025your,zhang2025monoattack,fang2023hierarchical,liu2024towards,yang2025eood,fang2022multi,fang2026cogniVerse,lei2025exploring,fang2023you,wang2025dypolyseg,fang2025hierarchical,yan2026fit,fang2025adaptive,wang2026topadapter,cai2025imperceptible,fang2026slap,wang2026reasoning,fang2020double,wang2026biologically,fang2026disentangling,wang2025reducing,fang2026advancing,fang2026unveiling,wang2026from,liu2023conditional,liu2026attacking,fang2026rethinking,wang2025seeing,fang2026towards,fang2025multi,fang2024fewer,liu2024pandora,fang2024multi,fang2025turing,fang2024not,liu2023hypotheses,fang2024rethinking,liu2024unsupervised,fang2023annotations,xiong2024rethinking,fang2021unbalanced,wang2025prototype,zhang2025manipulating,fang2026align,tang2024reparameterization,fang2025adaptivetai,tang2025simplification,fang2021animc,cai2026towards,fang2020v}, the out-of-distribution (OOD) detection task aims to detect test samples from distributions that do not overlap with the training distribution. Previous OOD detection methods \cite{liang2018enhancing,liu2020energy,sun2021react,lee2018simple,mohseni2020self,vyas2018out,yu2019unsupervised,zaeemzadeh2021out,hsu2020generalized,ming2023exploit,jiangnegative}  can be divided into four types: classification-based methods \cite{hendrycksbaseline,liang2018enhancing,lee2018hierarchical,lee2018training}, density-based methods \cite{kirichenko2020normalizing,serrainput}, distance-based methods \cite{techapanurak2020hyperparameter,lee2018simple} and reconstruction-based methods \cite{zhou2022rethinking,yang2022out}.

\section{The Proposed Immuno-VLM Framework}
\label{sec:method}

% In this section, we provide a rigorous mathematical formulation of the Open-World Vision-Language Recognition problem. We deviate from standard Open-Set definitions by incorporating the semantic continuity provided by VLMs, leading to our novel \textit{Antibody Covering Theorem}.

\begin{figure*}[t!]
\centering
\includegraphics[width=0.9\textwidth]{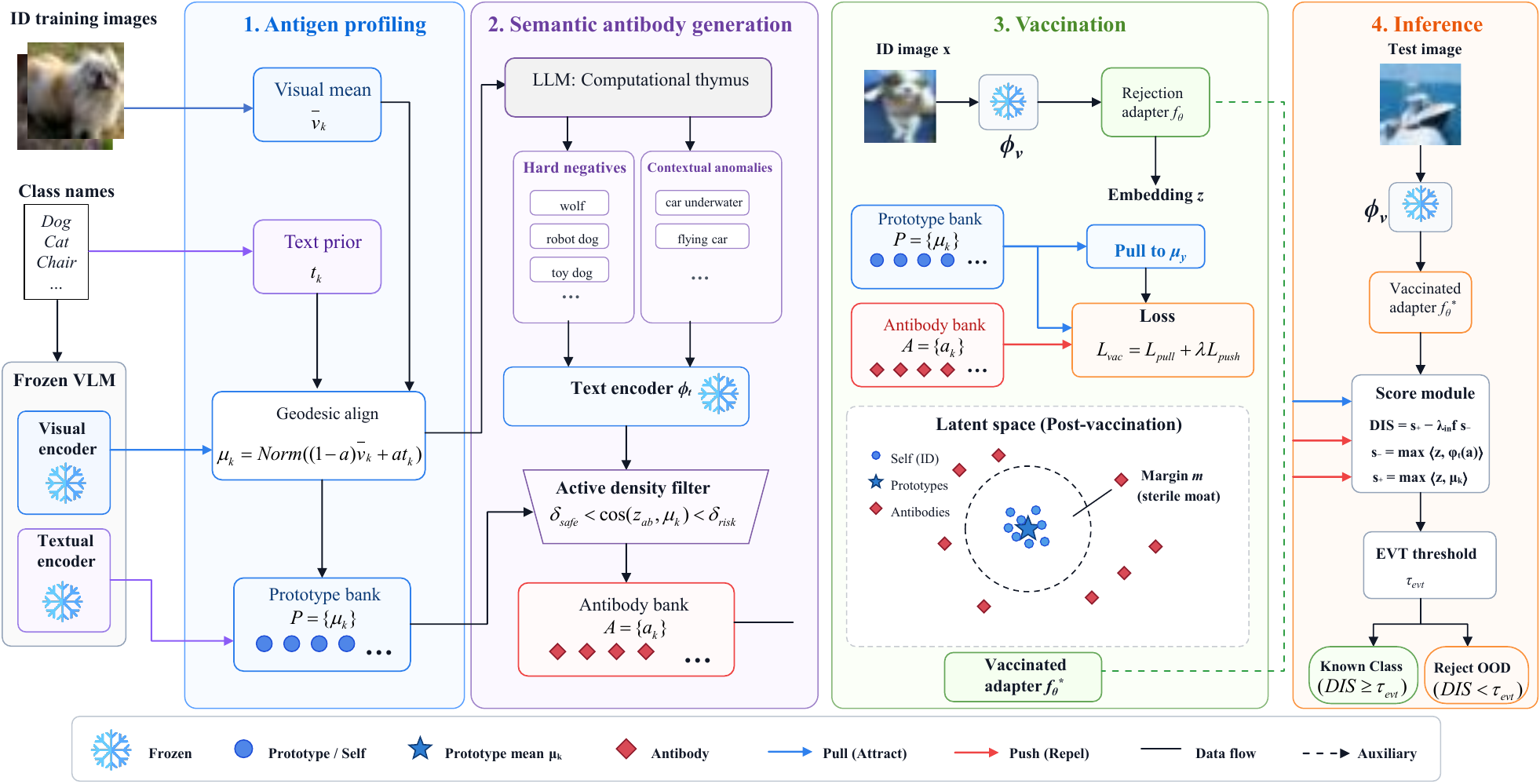}
 \vspace{-8pt}
\caption{\small The Immuno-VLM Framework. The pipeline transforms a pre-trained VLM into a trustworthy open-world recognizer by explicitly defining the ``Non-Self'' space via generative hallucination. }
\label{fig:pipeline}
 \vspace{-13pt}
\end{figure*}

% \begin{figure}[t]
%   \centering
%   % \framebox{\parbox{0.9\textwidth}{\centering
%   %   \vspace{3cm}
%   %   \textbf{Figure 3: Immuno-VLM System Architecture} \\
%   %   \vspace{0.5cm}
%   %   \textit{Phase 1 (Antigen Profiling):} Compute robust centroids for known classes ($\mathcal{Y}_{in}$). \\
%   %   \textit{Phase 2 (Antibody Generation):} Prompt LLM to hallucinate Hard Negatives and Contextual Anomalies. \\
%   %   \textit{Phase 3 (Vaccination):} Train the Rejection Adapter to ``Pull'' Self-images to centroids and ``Push'' them away from Antibodies.
%   %   \vspace{3cm}
%   % }}
%   \includegraphics[width=\columnwidth]{Figure3.pdf}
%   \caption{The Immuno-VLM Framework. The pipeline transforms a pre-trained VLM into a trustworthy open-world recognizer by explicitly defining the ``Non-Self'' space via generative hallucination.}
%   \label{fig:framework}
% \end{figure}

\subsection{Problem Formulation}
\label{subsec:formulation}

We consider the problem of open-world visual recognition within a joint vision-language embedding space. Let $\mathcal{X} \subseteq \mathbb{R}^{H \times W \times 3}$ denote the input image space and $\mathcal{T}$ denote the discrete space of natural language descriptions.

We assume the existence of a pre-trained Vision-Language Model (VLM) consisting of dual encoders: a visual encoder $\phi_v: \mathcal{X} \to \mathcal{Z}$ and a text encoder $\phi_t: \mathcal{T} \to \mathcal{Z}$, mapping both modalities into a shared hyperspherical latent space $\mathcal{Z} \subset \mathbb{R}^d$, where typically $||\bm{z}||_2 = 1$ for all $\bm{z} \in \mathcal{Z}$.

\paragraph{The Open-World Risk Landscape}
In a standard Closed-Set setting, the training data $\mathcal{D}_{train} = \{(x_i, y_i)\}_{i=1}^N$ is drawn from a joint distribution $P_{\mathcal{X} \mathcal{Y}}$ defined over a known label set $\mathcal{Y}_{in} = \{1, \dots, K\}$. The goal is to minimize the empirical risk $\hat{\mathcal{R}}_{in}(f)$.

In the Open-World setting, we encounter a test distribution $P_{test}$ which is a mixture of the known distribution $P_{in}$ and an unknown distribution $P_{out}$ defined over a label set $\mathcal{Y}_{out}$, where $\mathcal{Y}_{in} \cap \mathcal{Y}_{out} = \emptyset$. The crucial challenge is that the support of $P_{out}$ in the pixel space $\mathcal{X}$ is unbounded.
We define the \textit{Open Space Risk} $\mathcal{R}_{\mathcal{O}}(f)$ following the formulation by Scheirer et al., but adapted for the metric space $\mathcal{Z}$:

\begin{definition}[\textbf{Metric Open Space Risk}]
Given a recognition function $f: \mathcal{Z} \to \mathcal{Y}_{in} \cup \{\bot\}$ (where $\bot$ denotes rejection), the open space risk is defined as the probability of misclassifying an open-world sample as a known class, weighted by its distance from the known distribution:
\small
\begin{equation}
    \mathcal{R}_{\mathcal{O}}(f) = \int_{\mathcal{O}} \mathbb{I}(f(z) \in \mathcal{Y}_{in}) \cdot \nu(z) \, dz,
\end{equation}\normalsize
where $\mathcal{O} = \mathcal{Z} \setminus \text{supp}(P_{in})$ is the open space, $\mathbb{I}(\cdot)$ is the indicator function, and $\nu(z)$ is a probability measure of observing a sample at $z$.
\end{definition}
The fundamental difficulty in minimizing $\mathcal{R}_{\mathcal{O}}(f)$ is that $\mathcal{O}$ is vast. Standard methods try to bound $\text{supp}(P_{in})$ tightly (e.g., using hyperspheres). However, without explicit samples from $\mathcal{O}$, these boundaries are often arbitrary and loose.

\paragraph{The Semantic Manifold Assumption}
To address the intractability of sampling from the entire open pixel space, we leverage the structure of the VLM. We posit that the open space $\mathcal{O}$ is not structurally random but follows a \textit{semantic manifold}.

\begin{assumption}[\textbf{Bi-Lipschitz Semantic Alignment}]
\label{assum:lipschitz}
We assume the text encoder $\phi_t$ and visual encoder $\phi_v$ share an alignment structure such that for any visual concept $x$ and its true semantic description $t$, there exists a constant $L$ such that the embedding distance is bounded:
\small
\begin{equation}
    \norm{\phi_v(x) - \phi_t(t)}_2 \leq \epsilon_{align}.
\end{equation}\normalsize
Furthermore, the mapping from semantic concepts to visual embeddings is locally Bi-Lipschitz. Let $S_{concept}$ be a semantic space metric. For distinct concepts $t_1, t_2$:
\small
\begin{equation}
    c_1 S_{concept}(t_1, t_2) \leq \norm{\phi_t(t_1) - \phi_t(t_2)}_2 \leq c_2 S_{concept}(t_1, t_2). \nonumber
\end{equation}\normalsize
\end{assumption}

It implies that if we can identify textual descriptions (Antibodies) that are semantically distinct from $\mathcal{Y}_{in}$ but semantically close to the boundary of $\mathcal{Y}_{in}$, their projections in $\mathcal{Z}$ will structurally bound the visual open space.

\paragraph{Semantic Antibodies and Covering Bound}
We define a set of \textbf{Semantic Antibodies} $\mathcal{A} = \{a_1, \dots, a_M\} \subset \mathcal{T}$ as a set of generated descriptions such that for all $a_j \in \mathcal{A}$ and $y \in \mathcal{Y}_{in}$, $a_j$ is semantically distinct from $y$.
Let $H_k \subset \mathcal{Z}$ be the hypothesis space (acceptance region) for known class $k$. The goal of \textbf{Immuno-VLM} is to shape $H_k$ such that it minimizes overlaps with the antibody embeddings $\phi_t(\mathcal{A})$.

\begin{figure}[t]
  \centering
  % \framebox{\parbox{0.9\textwidth}{\centering
  %   \vspace{2.5cm}
  %   \textbf{Figure 4: The Geometry of Semantic Covering} \\
  %   \vspace{0.5cm}
  %   \textit{Visualizing Theorem 1:} The ``Self'' distribution (blue core) is surrounded by the vast ``Open Space''. Random noise (Standard AIS) falls on the equator (orthogonal). Our Semantic Antibodies (red dots) form a $\delta$-cover specifically on the ``Vulnerable Boundary'' (the semantic manifold), effectively locking the decision boundary.
  %   \vspace{2.5cm}
  % }}
  \includegraphics[width=0.23\textwidth]{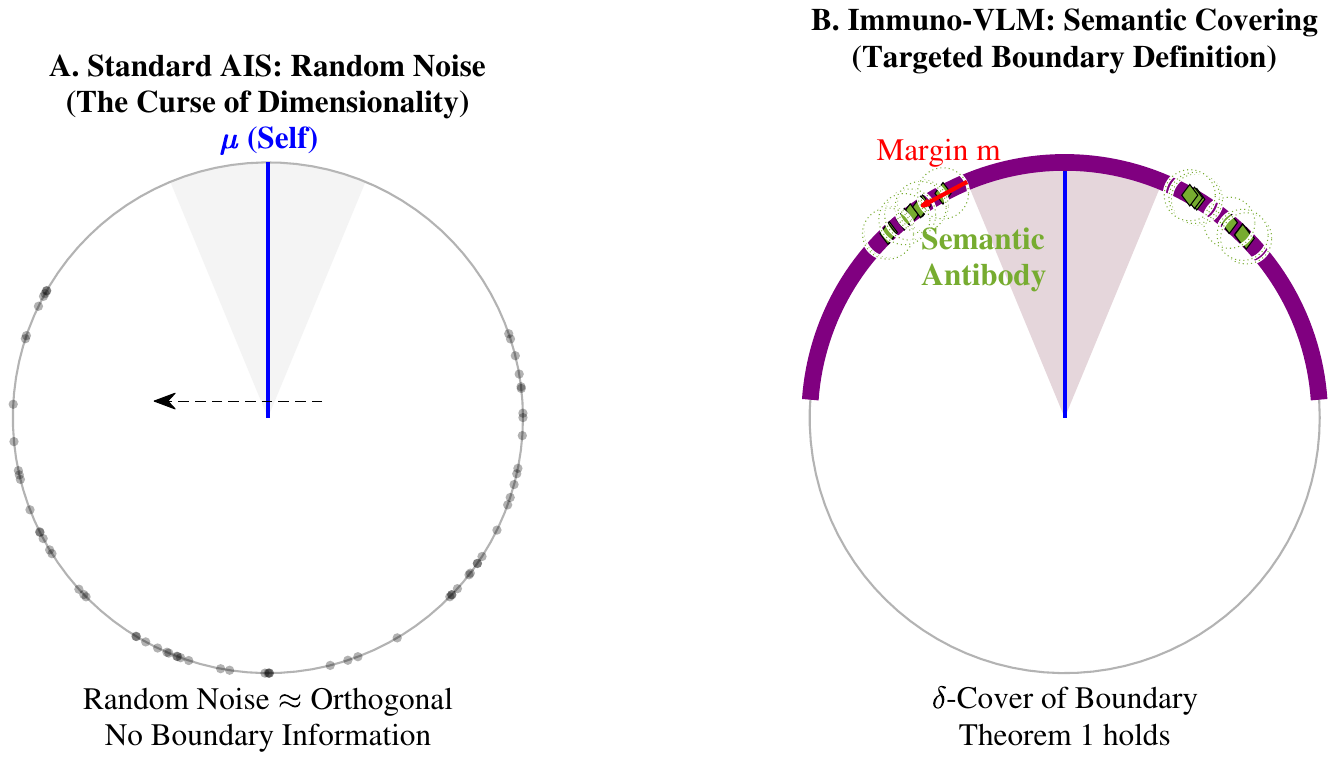} 
\hspace{-0.05in}
\includegraphics[width=0.23\textwidth]{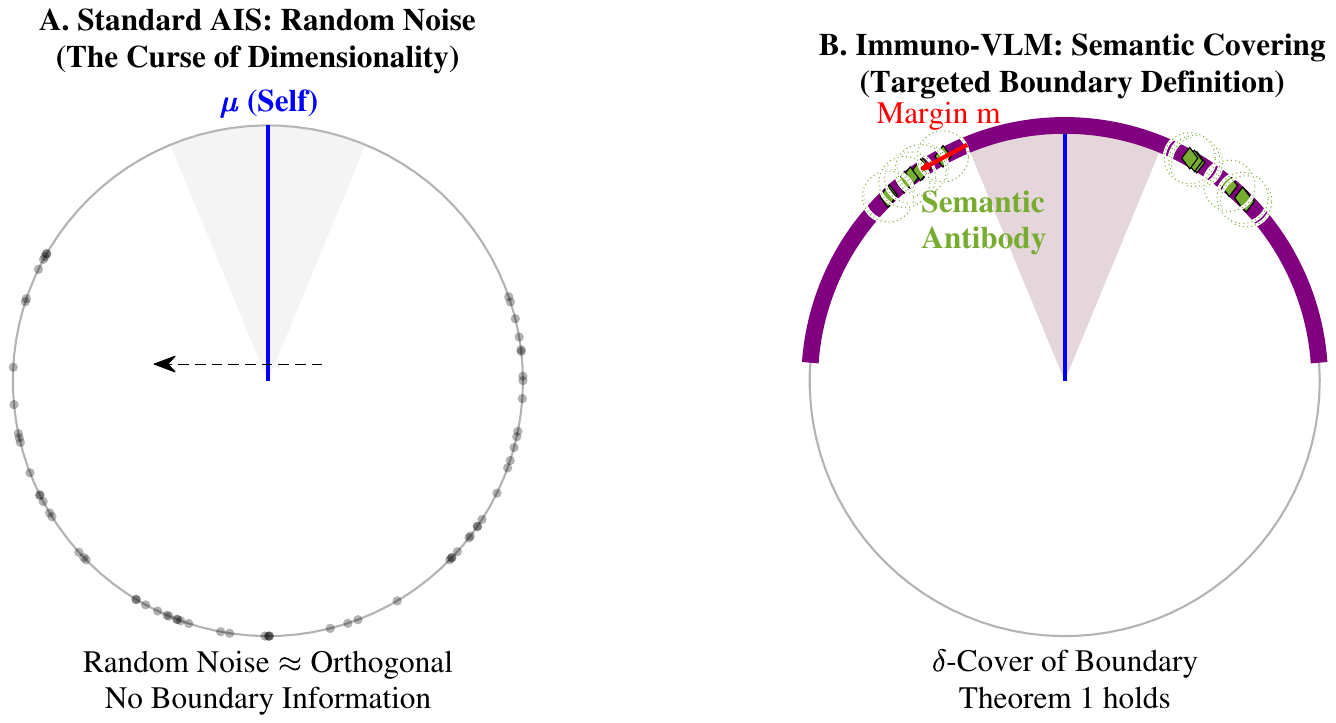} 
    \vspace{-3pt}
  \caption{\small Breaking the Curse of Dimensionality. By targeting the semantic manifold, a finite number of antibodies ($M \approx 100$) can effectively cover the boundary of the Self, whereas random sampling in $\mathbb{R}^{512}$ would require exponentially more points.}
  \label{fig:covering_geometry}
    \vspace{-13pt}
\end{figure}

\begin{theorem}[\textbf{The Antibody Covering Bound}]
\label{thm:antibody}
Let $\mathcal{A}_{\delta}$ be a $\delta$-cover of the semantic boundary of $\mathcal{Y}_{in}$, meaning for any open-world concept $z_{out}$ within distance $\gamma$ of a known class prototype $\mu_k$, there exists an antibody $a \in \mathcal{A}_{\delta}$ such that $\norm{z_{out} - \phi_t(a)} \leq \delta$.
If the classifier $f$ enforces a margin $m > \delta$ between known prototypes and all antibodies in $\mathcal{A}_{\delta}$, then the False Positive Rate (FPR) on open-world samples in the $\gamma$-neighborhood is strictly bounded by the alignment error $\epsilon_{align}$.
\end{theorem}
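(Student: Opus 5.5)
The plan is to turn the informal statement into a geometric argument in $\mathcal{Z}$ and then push the remaining probability mass onto the alignment error of Assumption~\ref{assum:lipschitz}. First, the terms need fixing. A \emph{false positive} is an open-world visual sample $x$ whose embedding $z=\phi_v(x)$ lies in the $\gamma$-neighborhood $B_\gamma = \bigcup_k \{z : \norm{z-\mu_k}\le\gamma\}\cap\mathcal{O}$ and satisfies $f(z)\in\mathcal{Y}_{in}$. The margin condition I will read as a rejection rule: $f$ accepts $z$ into $H_k$ only if $\min_{a\in\mathcal{A}_\delta}\norm{z-\phi_t(a)} > m$. Equivalently, every acceptance region $H_k$ is disjoint from $\bigcup_a B(\phi_t(a), m)$, which is the natural formalization of ``enforces a margin $m$ between known prototypes and all antibodies''.

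\textbf{Step 1 (covering the semantic image).} Let $t$ be the true semantic description of an open-world sample $x$ with $\norm{\phi_t(t)-\mu_k}\le\gamma$. The $\delta$-cover hypothesis then gives an antibody $a$ with $\norm{\phi_t(t)-\phi_t(a)}\le\delta$. If the cover is only stated for points $z_{out}$ in $\mathcal{Z}$, it applies to $\phi_t(t)$ directly. \textbf{Step 2 (triangle inequality).} The visual embedding satisfies $\norm{\phi_v(x)-\phi_t(a)}\le \norm{\phi_v(x)-\phi_t(t)}+\delta$. Whenever the alignment residual $\norm{\phi_v(x)-\phi_t(t)}$ is at most $m-\delta$, which is positive because $m>\delta$, the sample lies inside $B(\phi_t(a),m)$ and is rejected. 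So a false positive can only occur on the event $E=\{\norm{\phi_v(x)-\phi_t(t)}> m-\delta\}$. This gives the event-level inclusion
\begin{equation}
\{x:\ \phi_v(x)\in B_\gamma,\ f(\phi_v(x))\in\mathcal{Y}_{in}\}\subseteq E. \nonumber
\end{equation}
The cover is taken around the $\gamma$-ball of the semantic point. The visual point may sit up to $\epsilon_{align}$ further out, so I will enlarge $\gamma$ to $\gamma+\epsilon_{align}$ in the cover hypothesis, or state the result for samples whose semantic image lies in the neighborhood.

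\textbf{Step 3 (bounding the probability).} Integrating over $\nu$ restricted to $B_\gamma$ gives $\mathrm{FPR}_\gamma \le \nu(E\mid B_\gamma)$. Read deterministically, Assumption~\ref{assum:lipschitz} makes the residual at most $\epsilon_{align}$ always. So if $\epsilon_{align}\le m-\delta$, then $E$ is empty and the FPR is $0$. This is the strongest form of ``bounded by the alignment error'': the FPR vanishes once the margin slack exceeds the alignment error. To obtain a quantitative bound that scales with $\epsilon_{align}$, I will instead treat $\epsilon_{align}$ as an expected residual, $\mathbb{E}\norm{\phi_v(x)-\phi_t(t)}\le\epsilon_{align}$, and apply Markov's inequality. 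That yields $\mathrm{FPR}_\gamma\le \epsilon_{align}/(m-\delta)$, a bound proportional to the alignment error for a fixed margin gap. I will present both readings, with the Markov version as the main statement.

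The main obstacle is not computational but definitional. The statement conflates concept-space cover points $z_{out}$ with visual embeddings, and it leaves the role of the margin and the meaning of ``bounded by $\epsilon_{align}$'' imprecise. The delicate part is therefore Step 2's bookkeeping: making sure the $\gamma$-neighborhood used in the cover hypothesis contains the semantic images of every visual sample counted in the FPR. I plan to handle this by stating the theorem explicitly for samples whose semantic description lies within $\gamma$ of a prototype. The bi-Lipschitz part of the assumption is not needed here; I would only note that it is what makes a finite $\delta$-cover plausible in later sample-complexity results.
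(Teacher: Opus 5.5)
Your argument has the same core step as the paper: the alignment residual, the cover, and one triangle inequality give $\norm{\phi_v(x_{out})-\phi_t(a^*)}\le\epsilon_{align}+\delta$. Where you differ is in how you finish.

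\textbf{Where the margin sits.} The paper keeps the margin where the statement puts it, between prototypes and antibodies, $\norm{\mu_k-\phi_t(a^*)}\ge m$. It then applies the triangle inequality a second time to get $\norm{\phi_v(x_{out})-\mu_k}\ge m-\delta-\epsilon_{align}$. Rejection is left to an unspecified prototype-similarity threshold, and the proof ends with a qualitative ``the risk is bounded'' conclusion. You instead place the margin on the acceptance regions, so the first triangle inequality already forces rejection and the prototypes drop out.

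That is a stronger hypothesis on $f$ than the literal wording: it implies the prototype margin when $\mu_k\in H_k$, but not conversely. The two readings can be reconciled. The paper's margin keeps every $H_k\subseteq B(\mu_k,r)$ at distance at least $m-r$ from all antibodies. Your argument then runs with $m-r$ in place of $m$ and needs $r<m-\delta-\epsilon_{align}$, which is exactly the threshold the paper tacitly requires.

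\textbf{What your route buys.} You get a sharper result. Deterministically, $\mathrm{FPR}_\gamma=0$ when $\epsilon_{align}\le m-\delta$; this is the same strengthening of the stated $m>\delta$ that the paper's proof invokes as $m>\epsilon_{align}+\delta$. Under an in-expectation alignment assumption you also get an explicit bound $\epsilon_{align}/(m-\delta)$, which the paper never derives. This Markov bound carries content only under the relaxed assumption: under the pointwise Assumption~\ref{assum:lipschitz} it is either beaten by $0$ or exceeds $1$. You also handle the visual-versus-semantic neighborhood mismatch, which the paper's proof skips.

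\textbf{A bookkeeping fix for the Markov version.} You designate this version as the main statement, so it needs care. Once residuals are bounded only in expectation, enlarging the cover to radius $\gamma+\epsilon_{align}$ no longer guarantees that every sample in $B_\gamma$ has a covered semantic image. So the inclusion into $E$ is not yet justified.

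Enlarge the cover to $\gamma+(m-\delta)$ instead. Suppose $\phi_v(x)$ lies within $\gamma$ of $\mu_k$ but $\phi_t(t)$ is farther than $\gamma+(m-\delta)$ from every prototype. Then the reverse triangle inequality already gives a residual larger than $m-\delta$, so $x\in E$ regardless.

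Also state the expectation conditionally, $\mathbb{E}\bigl[\norm{\phi_v(x)-\phi_t(t)}\mid B_\gamma\bigr]\le\epsilon_{align}$. An unconditional bound only yields $\epsilon_{align}/\bigl((m-\delta)\,\nu(B_\gamma)\bigr)$.
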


\begin{proof}
Let $x_{out}$ be an unknown sample with true semantic description $t_{out}$. By Assumption \ref{assum:lipschitz}, $\norm{\phi_v(x_{out}) - \phi_t(t_{out})} \leq \epsilon_{align}$.
Since $\mathcal{A}_{\delta}$ is a cover, there exists an antibody $a^* \in \mathcal{A}_{\delta}$ such that $\norm{\phi_t(t_{out}) - \phi_t(a^*)} \leq \delta$.
Using the triangle inequality: $\norm{\phi_v(x_{out}) - \phi_t(a^*)} \leq \norm{\phi_v(x_{out}) - \phi_t(t_{out})} + \norm{\phi_t(t_{out}) - \phi_t(a^*)} \leq \epsilon_{align} + \delta.$
% \begin{align}
%        \norm{\phi_v(x_{out}) - \phi_t(a^*)} &\leq \norm{\phi_v(x_{out}) - \phi_t(t_{out})} \\
%        &+ \norm{\phi_t(t_{out}) - \phi_t(a^*)} \leq \epsilon_{align} + \delta.\nonumber
% \end{align}
The classifier $f$ rejects $x$ if its similarity to any known prototype $\mu_k$ is lower than a threshold derived from the antibodies. If we enforce the training constraint (Vaccination Loss) such that prototypes are separated from antibodies by margin $m$: $\norm{\mu_k - \phi_t(a^*)} \geq m.$
% \begin{equation}
%     \norm{\mu_k - \phi_t(a^*)} \geq m.
% \end{equation}
For a false positive to occur, $x_{out}$ must be close to $\mu_k$. However, $x_{out}$ is anchored to $a^*$. If $m > \epsilon_{align} + \delta$, the triangle inequality ensures $x_{out}$ cannot be arbitrarily close to $\mu_k$ without violating the margin constraint or the metric structure. Thus, the risk is bounded by the density of the cover $\delta$ and the alignment quality $\epsilon_{align}$.
\end{proof}
% The proof is placed in Appendix \ref{proof:thm1}.
This theorem provides the theoretical foundation for our method: optimizing against a dense set of LLM-generated near-OOD descriptions effectively tightens the decision boundary in a way that random noise injection (standard AIS) cannot.

\subsection{Method Overview}
\label{subsec:method_overview}

Our proposed Immuno-VLM framework, illustrated in Figure \ref{fig:pipeline}, operationalizes the biological principle of negative selection through a three-phase pipeline designed to transform a pre-trained Vision-Language Model into a trustworthy open-world recognizer.

\subsection{Phase 1: Antigen Profiling via Bayesian Hyperspherical Estimation}
\label{subsec:antigen}

In the biological immune system, the definition of ``Self'' is established via the presence of self-antigens. In our computational framework, the ``Self'' for class $k$ is not merely a single point but a distribution on the hypersphere $\mathcal{Z}$. Standard approaches often approximate this using the empirical mean of visual features. However, LVLMs suffer from a \textit{modality gap}, where the visual centroid and the text embedding of the class name are misaligned.
To rigorously define the Antigen Profile $\mathbf{p}_k$ for each class, we propose a \textbf{Text-Regularized Von Mises-Fisher (vMF) Estimation}.

\paragraph{Hyperspherical Distribution Modeling}
Since the embeddings are $L_2$-normalized, the feature space is the unit hypersphere $\mathbb{S}^{d-1}$. We model the distribution of ``Self'' features for class $k$ using a Von Mises-Fisher distribution $vMF(\bm{\mu}_k, \kappa_k)$, with probability density function:
\small
\begin{equation}
    f(\mathbf{z} | \bm{\mu}_k, \kappa_k) = C_d(\kappa_k) \exp(\kappa_k \bm{\mu}_k^\top \mathbf{z}),
\end{equation}\normalsize
where $\bm{\mu}_k \in \mathbb{S}^{d-1}$ is the mean direction (the Antigen Prototype), $\kappa_k \geq 0$ is the concentration parameter (inverse temperature), and $C_d(\kappa_k)$ is the normalization constant defined as $C_d(\kappa) = \frac{\kappa^{d/2-1}}{(2\pi)^{d/2} I_{d/2-1}(\kappa)}$, with $I_\nu$ being the modified Bessel function of the first kind.

\paragraph{Text-Prioritized Prototype Estimation}
We observe a set of visual samples $\mathcal{V}_k = \{\mathbf{v}_i\}_{i=1}^{N_k}$ for class $k$. Additionally, we have the zero-shot text embedding $\mathbf{t}_k = \phi_t(\text{name}_k)$. A naive Maximum Likelihood Estimation (MLE) would set $\bm{\mu}_k \propto \sum \mathbf{v}_i$. However, in open-world settings, visual samples might be noisy, sparse, or biased (e.g., all ``dogs'' are on grass). The text embedding $\mathbf{t}_k$ provides a stable, semantic prior.

We formulate the prototype estimation as a constrained optimization problem on the Riemannian manifold to reconcile the visual evidence with the semantic prior:
\small
\begin{equation}
    \max_{\bm{\mu}_k \in \mathbb{S}^{d-1}} \sum_{\mathbf{v} \in \mathcal{V}_k} \kappa_k \bm{\mu}_k^\top \mathbf{v} \quad \text{s.t.} \quad \arccos(\bm{\mu}_k^\top \mathbf{t}_k) \leq \xi,
\end{equation}\normalsize
where $\xi$ is a tolerance angle allowing for intrinsic modality misalignment.

\begin{theorem}[\textbf{Geodesic Antigen Alignment}]
\label{thm:geodesic}
The optimal Antigen Prototype $\bm{\mu}_k^*$ that solves the constrained maximization problem lies strictly on the geodesic arc connecting the normalized empirical visual mean $\bar{\mathbf{v}}_k = \frac{\sum \mathbf{v}}{\norm{\sum \mathbf{v}}}$ and the text embedding $\mathbf{t}_k$. Specifically, we have $\bm{\mu}_k^* = \frac{(1-\alpha)\bar{\mathbf{v}}_k + \alpha \mathbf{t}_k}{\norm{(1-\alpha)\bar{\mathbf{v}}_k + \alpha \mathbf{t}_k}}$
% \begin{equation}
%     \bm{\mu}_k^* = \frac{(1-\alpha)\bar{\mathbf{v}}_k + \alpha \mathbf{t}_k}{\norm{(1-\alpha)\bar{\mathbf{v}}_k + \alpha \mathbf{t}_k}}
% \end{equation}
for some Lagrange multiplier-derived scalar $\alpha \in [0, 1]$.
\end{theorem}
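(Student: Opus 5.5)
The plan is to reduce the problem to maximizing a linear functional over a spherical cap and then read off the structure from the KKT conditions. Writing $\mathbf{s}_k = \sum_{\mathbf{v}\in\mathcal{V}_k}\mathbf{v}$, the objective equals $\kappa_k \norm{\mathbf{s}_k}\,\bm{\mu}_k^\top \bar{\mathbf{v}}_k$. Since $\kappa_k\norm{\mathbf{s}_k}>0$ (we assume $\mathbf{s}_k\neq 0$ and $\kappa_k>0$; otherwise every feasible point is optimal and the claim holds trivially with $\alpha=1$), the problem is equivalent to maximizing $\bm{\mu}^\top\bar{\mathbf{v}}_k$ over $\bm{\mu}\in\mathbb{S}^{d-1}$ subject to $\bm{\mu}^\top\mathbf{t}_k \ge \cos\xi$. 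Here the constraint $\arccos(\bm{\mu}^\top\mathbf{t}_k)\le\xi$ has been rewritten using the monotonicity of $\arccos$, assuming $\xi\in[0,\pi]$.

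The argument then splits into two cases.

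\textbf{Case 1: the constraint is inactive.} If $\bar{\mathbf{v}}_k$ is itself feasible, i.e.\ $\bar{\mathbf{v}}_k^\top\mathbf{t}_k\ge\cos\xi$, then by Cauchy--Schwarz it is the unique maximizer. This corresponds to $\alpha=0$.

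\textbf{Case 2: the constraint is active.} Otherwise the maximizer lies on the boundary $\bm{\mu}^\top\mathbf{t}_k=\cos\xi$. I would form the Lagrangian
\[
\mathcal{L}=\bm{\mu}^\top\bar{\mathbf{v}}_k+\lambda(\bm{\mu}^\top\mathbf{t}_k-\cos\xi)-\tfrac{\eta}{2}(\norm{\bm{\mu}}^2-1),
\]
with $\lambda\ge0$. Stationarity gives $\eta\bm{\mu}=\bar{\mathbf{v}}_k+\lambda\mathbf{t}_k$, so $\bm{\mu}^*\propto \bar{\mathbf{v}}_k+\lambda\mathbf{t}_k$. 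Setting $\alpha=\lambda/(1+\lambda)\in[0,1)$ yields exactly the stated normalized convex-combination form. Since $\lambda\ge 0$, the direction $\bar{\mathbf{v}}_k+\lambda\mathbf{t}_k$ lies in the cone spanned by the two vectors, which is why $\bm{\mu}^*$ lies on the minor geodesic arc between them.

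Existence of a maximizer follows from compactness of the feasible cap, and the cap is nonempty because it contains $\mathbf{t}_k$. A linear-independence constraint qualification is needed to justify the KKT conditions. It holds whenever $\bar{\mathbf{v}}_k\neq\pm\mathbf{t}_k$, since at a feasible point the gradients $\mathbf{t}_k$ and $\bm{\mu}$ are then independent. The degenerate cases are handled directly: $\bar{\mathbf{v}}_k=\mathbf{t}_k$ is trivial, and $\bar{\mathbf{v}}_k=-\mathbf{t}_k$ makes the arc non-unique, so it must be excluded or stated separately.

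The main obstacle is establishing that $\eta>0$ and that the multiplier $\lambda$ has the correct sign, so that $\bm{\mu}^*$ is the positive normalization rather than its antipode. The word ``strictly'' in the statement also needs care, since it should mean the minor arc rather than the complementary great-circle arc.

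I would address this with a direct geometric argument that bypasses the sign analysis. Restrict attention to the 2-plane $\Pi=\mathrm{span}(\bar{\mathbf{v}}_k,\mathbf{t}_k)$. For any feasible $\bm{\mu}$, decompose $\bm{\mu}=\bm{\mu}_\Pi+\bm{\mu}_\perp$. Replacing $\bm{\mu}$ by $\bm{\mu}_\Pi/\norm{\bm{\mu}_\Pi}$ scales both $\bm{\mu}^\top\bar{\mathbf{v}}_k$ and $\bm{\mu}^\top\mathbf{t}_k$ by $1/\norm{\bm{\mu}_\Pi}\ge 1$. This preserves feasibility and weakly improves the objective when both inner products are nonnegative, so the optimum lies on the great circle in $\Pi$. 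Nonnegativity of $\bm{\mu}^\top\bar{\mathbf{v}}_k$ at the optimum holds because the optimal value is at least $\mathbf{t}_k^\top\bar{\mathbf{v}}_k$; if that quantity is negative, one argues through the angle parametrization instead.

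On that circle, parametrize $\bm{\mu}$ by its angle $\theta$ measured from $\bar{\mathbf{v}}_k$. The objective is $\cos\theta$ and the constraint is $|\theta-\theta_{vt}|\le\xi$, where $\theta_{vt}$ is the angle between $\bar{\mathbf{v}}_k$ and $\mathbf{t}_k$. A one-dimensional check then shows the optimum is $\theta^*=\max(0,\theta_{vt}-\xi)\in[0,\theta_{vt}]$, which lies on the minor arc. This identifies $\alpha$ explicitly as a function of $\theta_{vt}$ and $\xi$.
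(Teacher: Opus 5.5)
Your argument is correct, and it is more complete than the paper's. The paper's proof is only your Case 2 stationarity step. It forms the Lagrangian with multipliers $\lambda$ and $\gamma$, obtains $\bm{\mu}_k \propto \kappa_k N_k \bar{\mathbf{v}}_k + \lambda \mathbf{t}_k$, and concludes that $\bm{\mu}_k^*$ lies on the great circle through $\bar{\mathbf{v}}_k$ and $\mathbf{t}_k$, with $\alpha$ ``determined by the binding constraint.'' That argument has several omissions:
\begin{itemize}
\item it never uses the sign of $\lambda$ or of the sphere multiplier, so it cannot exclude the antipodal normalization or the major arc;
\item it does not treat the inactive case, where the constraint does not bind and $\alpha=0$;
\item it does not address existence or constraint qualification;
\item it leaves $\alpha$ implicit.
\end{itemize}
Your route uses Cauchy--Schwarz for the inactive case, then a reduction to the plane $\Pi$ and a one-dimensional angular optimization. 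This supplies the minor-arc claim, uniqueness away from $\bar{\mathbf{v}}_k=-\mathbf{t}_k$, and the explicit optimizer $\theta^*=\max(0,\theta_{vt}-\xi)$, hence an explicit $\alpha$. The paper's version gains only brevity.

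Two small repairs are needed.

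First, the projection $\bm{\mu}\mapsto\bm{\mu}_\Pi/\norm{\bm{\mu}_\Pi}$ has limited validity. It preserves feasibility only when $\bm{\mu}^\top\mathbf{t}_k\ge 0$, which is automatic only for $\xi\le\pi/2$. It improves the objective only when $\bm{\mu}^\top\bar{\mathbf{v}}_k\ge 0$, which fails at the optimum when $\theta_{vt}>\xi+\pi/2$. Your fallback, ``argue through the angle parametrization,'' is circular, since that parametrization presupposes the optimizer already lies in $\Pi$. You already have the fix. The stationarity relation $\eta\bm{\mu}^*=\bar{\mathbf{v}}_k+\lambda\mathbf{t}_k$ puts $\bm{\mu}^*$ in $\Pi$ whatever the sign of $\eta$, and $\eta\neq 0$ unless $\bar{\mathbf{v}}_k=-\mathbf{t}_k$. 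So use KKT only to show the optimizer is in the plane, and use the 1D check to decide which point on the circle it is. Alternatively, the spherical triangle inequality settles all cases at once:
$\arccos(\bm{\mu}^\top\bar{\mathbf{v}}_k)\ge\theta_{vt}-\arccos(\bm{\mu}^\top\mathbf{t}_k)\ge\theta_{vt}-\xi$.
Equality is attained exactly at the point of the minor arc at angle $\xi$ from $\mathbf{t}_k$.

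Second, your LICQ condition is misstated. At an active point LICQ requires $\bm{\mu}\neq\pm\mathbf{t}_k$, i.e.\ $\xi\in(0,\pi)$, not $\bar{\mathbf{v}}_k\neq\pm\mathbf{t}_k$. The excluded case $\xi=0$ is the trivial $\bm{\mu}^*=\mathbf{t}_k$ with $\alpha=1$. Your Case 2 formula, which gives $\alpha\in[0,1)$, never produces it. The case $\xi=\pi$ falls under Case 1.
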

% The proof is placed in Appendix \ref{proof:thm2}.
\begin{proof}
Let the objective function be $J(\bm{\mu}_k) = \kappa_k \bm{\mu}_k^\top (N_k \bar{\mathbf{v}}_k)$. We introduce a Lagrange multiplier $\lambda$ for the constraint $\bm{\mu}_k^\top \mathbf{t}_k \geq \cos(\xi)$ (relaxing equality for the Lagrangian formulation). The Lagrangian is:
\begin{equation}
    \mathcal{L}(\bm{\mu}_k, \lambda) = \kappa_k N_k \bm{\mu}_k^\top \bar{\mathbf{v}}_k + \lambda (\bm{\mu}_k^\top \mathbf{t}_k - \cos(\xi)) - \gamma (\bm{\mu}_k^\top \bm{\mu}_k - 1).
\end{equation}
Taking the derivative with respect to $\bm{\mu}_k$ and setting to zero:
\begin{equation}
    \nabla_{\bm{\mu}_k} \mathcal{L} = \kappa_k N_k \bar{\mathbf{v}}_k + \lambda \mathbf{t}_k - 2\gamma \bm{\mu}_k = 0.
\end{equation}
This implies $\bm{\mu}_k \propto \kappa_k N_k \bar{\mathbf{v}}_k + \lambda \mathbf{t}_k$.
Since $\bm{\mu}_k$ is a linear combination of $\bar{\mathbf{v}}_k$ and $\mathbf{t}_k$, and must lie on the unit hypersphere, it geometrically resides on the great circle (geodesic) connecting them. The parameter $\alpha$ is strictly determined by the binding constraint $\xi$. This proves that the optimal Self-definition is a semantic interpolation, correcting visual noise with linguistic stability.
\end{proof}

\subsection{Phase 2: Generating Semantic Antibodies via Manifold Sampling}
\label{subsec:antibodies}

Traditional Artificial Immune Systems (AIS) generate negative detectors via random bit-string generation or isotropic Gaussian sampling in the feature space. While effective in low-dimensional settings, we argue that this approach fundamentally fails in the high-dimensional latent spaces of modern LVLMs ($d \geq 512$). We formalize this failure mode and propose \textbf{Generative Semantic Sampling} as the necessary solution.

\paragraph{The Inefficiency of Random Negative Selection}

Let $\mathcal{U}(\mathbb{S}^{d-1})$ be the uniform distribution on the unit hypersphere, representing isotropic random negative generation. A negative detector $\mathbf{z}_{neg} \sim \mathcal{U}$ is useful if and only if it falls into the ``Near-OOD'' region, sufficiently close to a class prototype $\bm{\mu}_k$ to shape the decision boundary, but not so close as to be a false positive.

We define the \textit{informative margin} for class $k$ as the spherical cap defined by angular distance $\theta \in [\theta_{self}, \theta_{boundary}]$.

\begin{theorem}[\textbf{Asymptotic Orthogonality of Isotropic Negatives}]
\label{thm:orthogonality}
For any fixed angular margin $\epsilon > 0$, the probability that a uniformly sampled negative detector $\mathbf{z}_{neg} \sim \mathcal{U}(\mathbb{S}^{d-1})$ has a non-trivial cosine similarity with a fixed prototype $\bm{\mu}_k$ decays exponentially with the dimension $d$. Specifically,
\small
\begin{equation}
    \lim_{d \to \infty} P(|\langle \mathbf{z}_{neg}, \bm{\mu}_k \rangle| \geq \epsilon) \leq 2 \exp\left(-\frac{d \epsilon^2}{2}\right).
\end{equation}\normalsize
\end{theorem}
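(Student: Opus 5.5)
The plan is to prove the non-asymptotic bound $P(|\langle \mathbf{z}_{neg}, \bm{\mu}_k\rangle| \geq \epsilon) \leq 2\exp(-d\epsilon^2/2)$ for every $d$ and every $\epsilon\in(0,1)$. The stated $\limsup$-type inequality is then immediate, and in fact the left-hand side tends to $0$. For $\epsilon \geq 1$ the event has probability zero, so those cases are trivial.

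\textbf{Step 1 (reduction to a spherical cap).} By rotational invariance of $\mathcal{U}(\mathbb{S}^{d-1})$, we may take $\bm{\mu}_k = e_1$. The quantity of interest is then $P(|z_1|\geq\epsilon)$, and by symmetry $z_1 \mapsto -z_1$ this equals $2P(z_1 \geq \epsilon)$. It therefore suffices to show that the normalized cap measure satisfies
\[
\sigma(\{z : z_1 \geq \epsilon\}) \leq e^{-d\epsilon^2/2}.
\]

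\textbf{Step 2 (cap bound).} I would use the classical cone-volume argument (Ball's lecture notes). Let $C=\{z\in\mathbb{S}^{d-1}: z_1\geq\epsilon\}$. Its normalized surface measure equals the ratio of volumes $\mathrm{vol}(\mathrm{cone}(C))/\mathrm{vol}(B^d)$, where $\mathrm{cone}(C)=\{tz: z\in C,\ t\in[0,1]\}$. For $\epsilon \geq 1/\sqrt{2}$, the cone is contained in the ball of radius $\sqrt{1-\epsilon^2}$ centered at $\epsilon e_1$. For $\epsilon< 1/\sqrt{2}$, it is contained in the ball of radius $1/(2\epsilon)$ centered at $e_1/(2\epsilon)$. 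In both cases the volume ratio is at most $(1-\epsilon^2)^{d/2}$, with a small adjustment for the second case. Applying $1-x\leq e^{-x}$ then gives $e^{-d\epsilon^2/2}$.

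A cleaner alternative avoids the case split. Represent $z = g/\|g\|$ with $g\sim N(0,I_d)$, and use the exact density of $z_1$, which is proportional to $(1-t^2)^{(d-3)/2}$ on $[-1,1]$. The tail integral $\int_\epsilon^1 (1-t^2)^{(d-3)/2}\,dt$ is bounded via $(1-t^2)\le e^{-t^2}$, and it is compared to the normalizing constant $\int_{-1}^1(1-t^2)^{(d-3)/2}\,dt \gtrsim \sqrt{2\pi/d}$. This gives $e^{-(d-3)\epsilon^2/2}$ up to a polynomial factor.

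\textbf{Main obstacle.} The hard step is making the constant match exactly $2e^{-d\epsilon^2/2}$ rather than, say, $2e^{-(d-3)\epsilon^2/2}$ or $\sqrt{\pi/8}\,e^{-d\epsilon^2/2}$. My first attempt would be the cone/ball containment of Step 2, which yields $e^{-d\epsilon^2/2}$ directly. If a lossy constant appears in the small-$\epsilon$ case, I would fall back on the observation that the statement is only claimed as $d\to\infty$. Any bound of the form $C\,e^{-(d-c)\epsilon^2/2}$ with fixed $c,C$ suffices: with both sides tending to zero, the claimed inequality holds for all sufficiently large $d$, and hence in the limit, where the left-hand side is $0$.
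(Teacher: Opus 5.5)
\textbf{The approach matches the paper's, but Step 2 as written fails because its two cases are reversed, and the fallback is invalid.}

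Your Step 1 and overall strategy are the same as the paper's. You rotate so that $\bm{\mu}_k=e_1$, use the symmetry $z_1\mapsto -z_1$ to reduce to one cap, bound its normalized measure by $e^{-d\epsilon^2/2}$, and double. The paper only asserts that cap bound (via the incomplete beta function), so supplying Ball's cone argument is the right way to make it rigorous. Reading the ill-posed ``$\lim_{d\to\infty}$'' as the per-$d$ bound is also the sensible interpretation.

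\textbf{Step 2: the cases are swapped.}
The ball of radius $\sqrt{1-\epsilon^2}$ centered at $\epsilon e_1$ does contain the cap. For $z\in\mathbb{S}^{d-1}$ with $z_1\ge\epsilon$, $\norm{z-\epsilon e_1}^2=1-2\epsilon z_1+\epsilon^2\le 1-\epsilon^2$. But it contains the apex $0$ of the cone only if $\epsilon\le\sqrt{1-\epsilon^2}$, i.e.\ $\epsilon\le 1/\sqrt2$. So in your case $\epsilon\ge 1/\sqrt2$ the claimed containment is false.

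Conversely, the ball of radius $1/(2\epsilon)$ centered at $e_1/(2\epsilon)$ contains the cone for every $\epsilon$. However, the resulting ratio $(2\epsilon)^{-d}$ is at least $1$ when $\epsilon\le 1/2$, so in your small-$\epsilon$ case it proves nothing. This is not ``a small adjustment'': $1/(2\epsilon)\ge\sqrt{1-\epsilon^2}$ for all $\epsilon$ (equivalently $(1-2\epsilon^2)^2\ge 0$), so that ball never yields $(1-\epsilon^2)^{d/2}$.

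The fix is to swap the cases:
\begin{itemize}
\item For $\epsilon\le 1/\sqrt2$, the radius-$\sqrt{1-\epsilon^2}$ ball gives $(1-\epsilon^2)^{d/2}\le e^{-d\epsilon^2/2}$.
\item For $1/\sqrt2<\epsilon<1$, the radius-$1/(2\epsilon)$ ball gives $(2\epsilon)^{-d}$. You must then check directly that $(2\epsilon)^{-1}\le e^{-\epsilon^2/2}$. This holds because $g(\epsilon)=\log(2\epsilon)-\epsilon^2/2$ satisfies $g'(\epsilon)=1/\epsilon-\epsilon\ge 0$ on $(0,1]$ and $g(1/\sqrt2)=\tfrac12\log 2-\tfrac14>0$.
\end{itemize}
With this correction the cone argument gives exactly $2e^{-d\epsilon^2/2}$ for every $d$, which is what the paper needs.

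\textbf{The fallback is a non sequitur and should be dropped.}
A bound $P\le Ce^{-(d-c)\epsilon^2/2}=(Ce^{c\epsilon^2/2})\,e^{-d\epsilon^2/2}$ gives $P\le 2e^{-d\epsilon^2/2}$ only if $Ce^{c\epsilon^2/2}\le 2$. That condition does not depend on $d$. Two sequences that both tend to $0$ are not ordered for large $d$ merely because they vanish.

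The only reading under which your fallback ``works'' is the literal one, where the left side is the single number $\lim_d P=0$ compared against a $d$-dependent right side. But then $P\to 0$ alone suffices, and the constant-matching discussion is moot.

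For the meaningful per-$d$ statement with constant $2$, you need the exact cap bound that the correctly cased cone argument provides. Your density route with $(1-t^2)\le e^{-t^2}$ only reaches exponent $d-3$. That is an extra factor of order $e^{3\epsilon^2/2}$, which cannot be absorbed into the constant $2$ uniformly in $\epsilon$.
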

% The proof is placed in Appendix \ref{proof:thm3}.
\begin{proof}
This result follows from the concentration of measure phenomenon on the high-dimensional sphere. The area of a spherical cap defined by angle $\beta$ (where $\cos \beta = \epsilon$) relative to the total area of the sphere $A_{d-1}$ is given by the regularized incomplete beta function, which can be bounded as:
\begin{equation}
    \frac{Area(Cap_{\epsilon})}{Area(\mathbb{S}^{d-1})} \leq \exp\left(-\frac{d \epsilon^2}{2}\right).
\end{equation}
This implies that as $d$ increases, almost all the probability mass of the uniform distribution concentrates on the equator, i.e., $\langle \mathbf{z}_{neg}, \bm{\mu}_k \rangle \approx 0$.
In standard CLIP models where $d=512$ or $d=768$, random vectors are effectively orthogonal to all class prototypes. Consequently, they provide zero gradients for boundary tightening and fail to immunize the model against near-distribution outliers.
\end{proof}
This theorem proves that classic AIS is computationally infeasible for LVLMs. We must sample from the \textit{semantic manifold}, not the geometric hypersphere.

\paragraph{Generative Adversarial Prompting}

To overcome the curse of dimensionality, we exploit the fact that the open space $\mathcal{O}$ is structured by language. We employ a Large Language Model (LLM) as a conditional generator $G: \mathcal{Y} \to 2^\mathcal{T}$.
For each known class $y \in \mathcal{Y}_{in}$, we prompt the LLM to generate a set of \textbf{Semantic Antibodies} $\mathcal{A}_y$.

% \begin{figure}[t]
%   \centering
%   % \framebox{\parbox{0.9\textwidth}{\centering
%   %   \vspace{2cm}
%   %   \textbf{Figure 5: Generative Semantic Sampling} \\
%   %   \vspace{0.5cm}
%   %   \textit{Left:} Chain-of-Thought Prompting to the LLM (``What looks like X but isn't X?''). \\
%   %   \textit{Center:} Generation of Text Candidates (``Wolf'', ``Toy'', ``Robot''). \\
%   %   \textit{Right:} Projection via $\phi_t$ into Latent Space. Only candidates in the ``Active Band'' ($0.3 < \text{sim} < 0.7$) are retained.
%   %   \vspace{2cm}
%   % }}
%   \includegraphics[width=\columnwidth]{Figure5.pdf}
%   \caption{The Antibody Generation Pipeline. This process translates the infinite open world into a finite, manageable set of ``Semantic Anchors'' that define the boundary of the known.}
%   \label{fig:gen_pipeline}
% \end{figure}

We distinguish between two types of antibodies to ensure coverage:
1) \textbf{Hard Semantic Negatives (Near-OOD):} Concepts that share visual attributes with $y$ but belong to disjoint categories, i.e., $\mathcal{A}_{hard}(y) = \{ t \in \mathcal{T} \mid \text{VisualSim}(t, y) > \tau_{high} \land \text{Category}(t) \neq y \}$.
    % \begin{equation}
    %     \mathcal{A}_{hard}(y) = \{ t \in \mathcal{T} \mid \text{VisualSim}(t, y) > \tau_{high} \land \text{Category}(t) \neq y \}.
    % \end{equation}
    \textit{Example:} For $y=\text{Wolf}$, $\mathcal{A}_{hard} = \{\text{Alaskan Malamute}, \text{Husky}, \text{Wolf-dog hybrid}\}$. 
2) \textbf{Contextual Anomalies (Covariate Shift):} Concepts where the object $y$ appears in impossible or anomalous contexts. i.e., $\mathcal{A}_{context}(y) = \{ t \in \mathcal{T} \mid t = y \oplus \text{context}_{anomaly} \}$.
    % \begin{equation}
    %     \mathcal{A}_{context}(y) = \{ t \in \mathcal{T} \mid t = y \oplus \text{context}_{anomaly} \}.
    % \end{equation}
    \textit{Example:} For $y=\text{Car}$, $\mathcal{A}_{context} = \{\text{Car underwater}, \text{Car melting in lava}, \text{Flying car}\}$.
% \begin{enumerate}
%     \item \textbf{Hard Semantic Negatives (Near-OOD):} Concepts that share visual attributes with $y$ but belong to disjoint categories.
%     \begin{equation}
%         \mathcal{A}_{hard}(y) = \{ t \in \mathcal{T} \mid \text{VisualSim}(t, y) > \tau_{high} \land \text{Category}(t) \neq y \}
%     \end{equation}
%     \textit{Example:} For $y=\text{Wolf}$, $\mathcal{A}_{hard} = \{\text{Alaskan Malamute}, \text{Husky}, \text{Wolf-dog hybrid}\}$.

%     \item \textbf{Contextual Anomalies (Covariate Shift):} Concepts where the object $y$ appears in impossible or anomalous contexts.
%     \begin{equation}
%         \mathcal{A}_{context}(y) = \{ t \in \mathcal{T} \mid t = y \oplus \text{context}_{anomaly} \}
%     \end{equation}
%     \textit{Example:} For $y=\text{Car}$, $\mathcal{A}_{context} = \{\text{Car underwater}, \text{Car melting in lava}, \text{Flying car}\}$.
% \end{enumerate}

\paragraph{The Antibody Filtration Mechanism}

Generating text is necessary but not sufficient; we must ensure the generated antibodies projected into $\mathcal{Z}$ are effective. We define the Antibody Projection as $\mathbf{z}_{ab} = \phi_t(G(y))$.
To prevent \textit{Auto-Immune Toxicity} (where an antibody is actually a synonym for the self-class), we apply a \textbf{Semantic Filtration Filter}:

\begin{proposition}[\textbf{Active Antibody Density}]
An antibody $a$ generated for class $k$ is retained in the vaccination set $\Omega_{vac}$ if and only if it satisfies the \textbf{Safety-Utility Condition}:
\small
\begin{equation}
    \delta_{safe} < \inner{\phi_t(a)}{\bm{\mu}_k} < \delta_{risk},
\end{equation}\normalsize
where $\delta_{safe}$ ensures the antibody is not the class itself (avoiding label confusion), and $\delta_{risk}$ ensures the antibody is close enough to the prototype to be a hard negative (avoiding the orthogonality issue proved in Theorem \ref{thm:orthogonality}).
\end{proposition}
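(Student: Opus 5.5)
The statement is really a characterization, so I would prove it by first making precise the two properties the filter is meant to capture, and then showing that, on the unit hypersphere, each property is equivalent to one of the two inequalities. The key identity is that for unit vectors $\norm{\phi_t(a) - \bm{\mu}_k}_2^2 = 2 - 2\inner{\phi_t(a)}{\bm{\mu}_k}$. Any condition phrased as a Euclidean or geodesic distance to the prototype therefore becomes a threshold on the cosine. The ``if and only if'' then reduces to choosing the thresholds as the cosine images of the two distance conditions.

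\textbf{Step 1 (utility $\Leftrightarrow$ lower cosine bound).} I would call an antibody \emph{useful} for class $k$ if its embedding lies in the region where it can serve as a covering point in Theorem \ref{thm:antibody}. That region consists of the points within distance $\gamma + \delta$ of $\bm{\mu}_k$, since only there can it be $\delta$-close to an open-world concept in the $\gamma$-neighborhood. By the identity above, this is equivalent to
\[
\inner{\phi_t(a)}{\bm{\mu}_k} \geq 1 - \tfrac{(\gamma+\delta)^2}{2},
\]
so the lower threshold is $1 - (\gamma+\delta)^2/2$. I would then invoke Theorem \ref{thm:orthogonality} to show this threshold is not vacuous: an antibody failing it contributes nothing to the cover. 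Hence the lower bound is necessary, and it is sufficient for membership in the informative band.

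\textbf{Step 2 (safety $\Leftrightarrow$ upper cosine bound).} I would call an antibody \emph{auto-immune toxic} if the vaccination margin cannot be enforced against it. Concretely, this happens when $\norm{\bm{\mu}_k - \phi_t(a)} < m$. A second route to the same condition is that the antibody falls inside the high-density cap of $vMF(\bm{\mu}_k,\kappa_k)$ at angle $\theta_{self}$, so pushing it away would also push genuine self samples away. For self samples, Assumption \ref{assum:lipschitz} places them within $\epsilon_{align}$ of the class text, which Theorem \ref{thm:geodesic} ties to $\bm{\mu}_k$. Both notions convert to an upper threshold on the cosine, namely $\min\{1 - m^2/2, \cos\theta_{self}\}$. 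The margin constraint of Theorem \ref{thm:antibody} is then satisfiable exactly when the cosine stays below this value.

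\textbf{Step 3 (assembly and main obstacle).} Retention means the antibody is both useful and safe, which is the conjunction of the two strict inequalities. The band is nonempty provided $m < \gamma + \delta$ and $\theta_{self}$ is small. I expect the main obstacle to be conceptual rather than computational: the proposition is close to definitional, so the ``only if'' direction has content only once toxicity and utility are pinned down as above. I would also point out that, read literally, the inequality assigns the labels in reverse. The threshold excluding synonyms must be the \emph{upper} bound on similarity, and the threshold excluding orthogonal antibodies must be the \emph{lower} bound. I would therefore state the result with the lower bound $1 - (\gamma+\delta)^2/2$ and the upper bound $\min\{1 - m^2/2, \cos\theta_{self}\}$, and prove the equivalence in that form.
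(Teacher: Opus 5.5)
The paper gives no proof of this proposition. It is a design rule that \emph{defines} $\Omega_{vac}$ as the set of generated antibodies passing the band-pass filter, with $\delta_{safe}$ and $\delta_{risk}$ left as free hyperparameters (set to $0.3$ and $0.7$ in the experiments). The ``if and only if'' therefore holds by construction, and the ``where'' clause is only informal motivation.

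Your route is genuinely different. You replace the tautology with a characterization that has content: you define usefulness and auto-immune toxicity geometrically, then convert both into cosine thresholds via $\norm{\phi_t(a)-\bm{\mu}_k}_2^2=2-2\inner{\phi_t(a)}{\bm{\mu}_k}$. This ties the filter to Theorem~\ref{thm:antibody}. It also correctly exposes that the verbal roles of the two thresholds are swapped relative to the inequality. The paper's own filtration ablation agrees with you: dropping the upper bound $\delta_{risk}$ admits synonyms (``Missile'' for ``Projectile''), while dropping the lower bound $\delta_{safe}$ admits irrelevant concepts.

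The price is that you prove a more specific statement than the literal one, with pinned rather than free thresholds. Note also that $m$ has three meanings in the paper: a Euclidean margin in Theorem~\ref{thm:antibody}, a cosine ceiling in $\mathcal{L}_{push}$, and $1-\cos$ in the margin ablation. Your $1-m^2/2$ matches only the first.

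One step needs repair. In Step~1 you assert that usefulness is \emph{equivalent} to $\norm{\phi_t(a)-\bm{\mu}_k}\le\gamma+\delta$, but on the sphere only the forward implication holds. Write $\theta_r=2\arcsin(r/2)$ for the angle subtending chord length $r$. The points of $\mathbb{S}^{d-1}$ within $\delta$ of the $\gamma$-cap around $\bm{\mu}_k$ form the cap of angular radius $\theta_\gamma+\theta_\delta$. Its chord radius $2\sin\bigl((\theta_\gamma+\theta_\delta)/2\bigr)$ is strictly less than $\gamma+\delta$, by subadditivity of $\sin$ on $[0,\pi]$. Hence antibodies with cosine in $[1-(\gamma+\delta)^2/2,\ \cos(\theta_\gamma+\theta_\delta))$ pass your filter yet are useless by your own definition.

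The exact lower threshold is $\cos(\theta_\gamma+\theta_\delta)$. The clean fix is to work in geodesic distance throughout, where the triangle inequality is tight along great circles. Your upper threshold is already exact, since $\min\{1-m^2/2,\cos\theta_{self}\}=\cos\max\{\theta_m,\theta_{self}\}$. The band then has nonempty interior iff $\max\{\theta_m,\theta_{self}\}<\theta_\gamma+\theta_\delta$, which is strictly stronger than your condition $m<\gamma+\delta$.

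Two smaller points:
\begin{itemize}
\item That an out-of-band antibody contributes nothing to the cover follows from the triangle inequality alone. Theorem~\ref{thm:orthogonality} only explains why random detectors almost never clear the lower threshold.
\item Your derived conditions are non-strict. Either define usefulness and safety with strict inequalities, or state the band with $\le$.
\end{itemize}
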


By enforcing this band-pass filter on cosine similarity, we curate a dataset of ``active'' negative samples that populate the decision boundary $\partial \mathcal{Z}_{self}$, enabling efficient metric learning in the subsequent vaccination phase.

\subsection{Phase 3: Vaccination via Contrastive Optimization}
\label{subsec:vaccination}

The final phase of Immuno-VLM is the ``Vaccination'' process, where the visual embedding space is refined to explicitly reject the generated semantic antibodies. Direct fine-tuning of the large-scale backbone $\phi_v$ is undesirable due to the risk of catastrophic forgetting (degrading closed-set performance) and high computational cost. Instead, we introduce a lightweight \textbf{Rejection Adapter} and a novel \textbf{Push-Pull Vaccination Loss}.

\paragraph{The Rejection Adapter}
We freeze the pre-trained encoders $\phi_v$ and $\phi_t$. We introduce a trainable residual adapter $f_\theta: \mathcal{Z} \to \mathcal{Z}$, parameterized by $\theta$: $f_\theta(\mathbf{z}) = \text{Norm}(\mathbf{z} + \text{MLP}(\mathbf{z})),$
% \begin{equation}
%     f_\theta(\mathbf{z}) = \text{Norm}(\mathbf{z} + \text{MLP}(\mathbf{z})),
% \end{equation}
where $\text{Norm}(\cdot)$ ensures the output remains on the unit hypersphere $\mathbb{S}^{d-1}$. This adapter acts as a ``metric correction'' lens, warping the local geometry to create separation between visual samples and antibody regions.

\paragraph{The Push-Pull Vaccination Loss}
We formulate a multi-objective loss function that balances the compactness of known classes with the rejection of hallucinated antibodies.

\begin{figure}[t]
  \centering
  % \framebox{\parbox{0.9\textwidth}{\centering
  %   \vspace{2.5cm}
  %   \textbf{Figure 6: Vaccination Loss Geometry} \\
  %   \vspace{0.5cm}
  %   \textit{Left:} Initial state. Self-samples (Blue) are spread out. Antibodies (Red) overlap with Self. \\
  %   \textit{Right:} Post-Vaccination. $\mathcal{L}_{pull}$ clusters the Self-samples. $\mathcal{L}_{push}$ enforces margin $m$ between Self and Antibodies, creating a clear decision boundary.
  %   \vspace{2.5cm}
  % }}
    \includegraphics[width=0.22\textwidth]{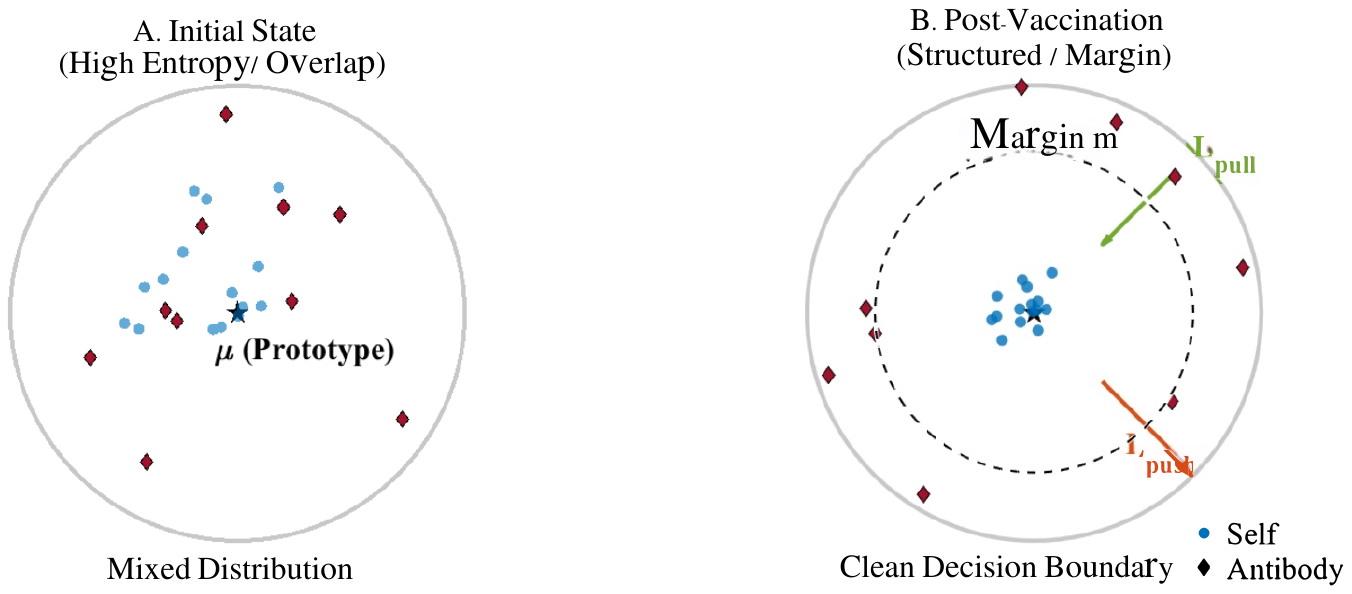} 
\hspace{-0.05in}
\includegraphics[width=0.24\textwidth]{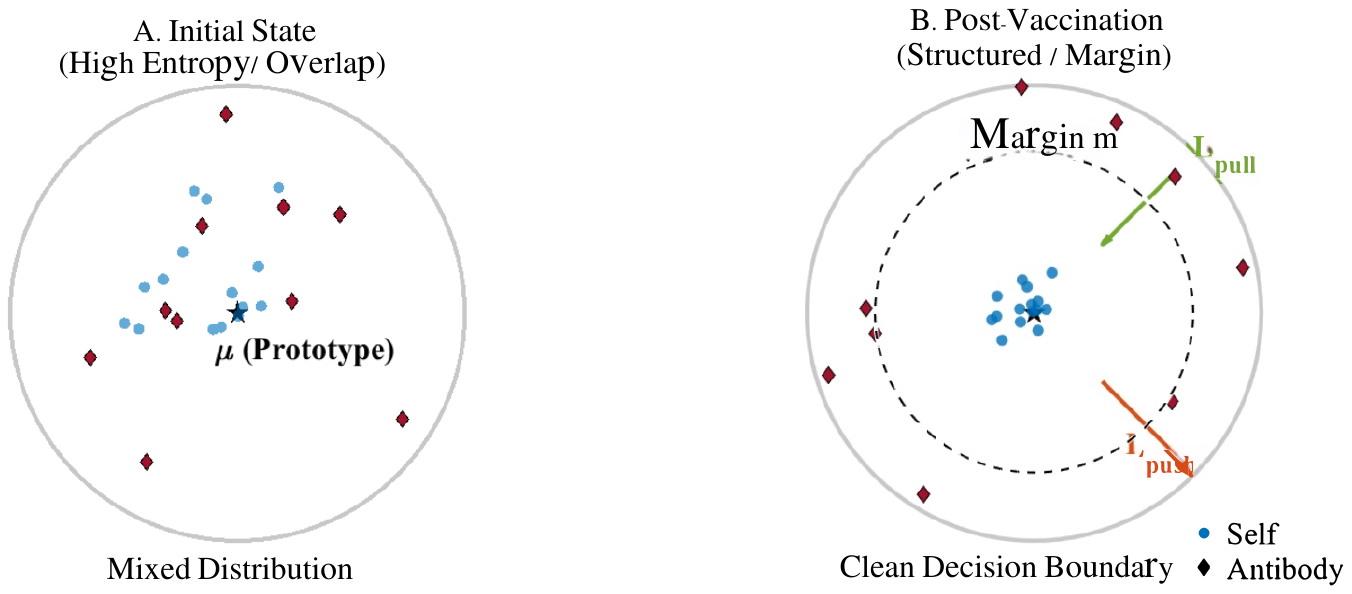}
  \caption{\small The Push-Pull Optimization. The loss function reshapes the Riemannian manifold, compacting the ``Self'' while creating a sterile moat against the ``Non-Self''.}
  \label{fig:loss_geometry}
      \vspace{-13pt}
\end{figure}

\textbf{1)  Attraction (Pull) Term:}
To maintain In-Distribution (ID) accuracy, visual samples $x$ from class $y$ must remain tightly clustered around their semantic prototype $\bm{\mu}_y$ (derived in Theorem \ref{thm:geodesic}). We maximize the log-likelihood of the Von Mises-Fisher distribution:
\small
\begin{equation}
    \mathcal{L}_{pull} = - \mathbb{E}_{(x, y) \sim \mathcal{D}_{in}} \left[ \log \frac{\exp(\kappa \bm{\mu}_y^\top f_\theta(\phi_v(x)))}{\sum_{k \in \mathcal{Y}_{in}} \exp(\kappa \bm{\mu}_k^\top f_\theta(\phi_v(x)))} \right].\nonumber
\end{equation}\normalsize
\textbf{2)  Repulsion (Push) Term:}
This is the core immunological objective. For a visual sample $x \in \text{Class } k$, we treat the generated antibody set $\mathcal{A}_k$ as explicit negative classes. We use a \textbf{Margin-based Hinge Loss} to enforce a ``sterile zone'' of width $m$ between the visual sample and any antibody.
\small
\begin{equation}
    \mathcal{L}_{push} = \mathbb{E}_{(x, y) \sim \mathcal{D}_{in}} [ \sum_{a \in \mathcal{A}_y} \max(0, \cos(f_\theta(\phi_v(x)), \phi_t(a)) - m)^2],\nonumber
\end{equation}\normalsize
where $m$ is the angular margin threshold (e.g., $0.2$). This forces the adapter to warp the visual embedding $f_\theta(x)$ away from the directions of near-OOD concepts.

Therefore, we have the following total objective loss:
% \paragraph{3. Total Objective:}
\small
\begin{equation}
    \mathcal{L}_{vac} = \mathcal{L}_{pull} + \lambda \mathcal{L}_{push} + \eta \|\theta\|_2^2.
\end{equation}\normalsize

\paragraph{Theoretical Guarantee: The Open-World Generalization Bound}
A critical question in Open-World Learning is: \textit{Does minimizing error on synthetic antibodies guarantee low error on real, unseen open-world data?} We answer this affirmatively by deriving a generalization bound based on the covering number of the antibody set.
Let $R_{\mathcal{O}}(f)$ be the true Open Space Risk (error rate on real unknown data) and $\hat{R}_{\mathcal{A}}(f)$ be the empirical risk on the generated antibodies.

\begin{theorem}[\textbf{Open Space Generalization via Synthetic Covering}]
\label{thm:generalization}
Let $\mathcal{H}$ be the hypothesis space of the adapter $f_\theta$. Assume the set of generated antibodies $\mathcal{A}$ forms a $\delta$-cover of the boundary manifold $\partial \mathcal{O}$ with respect to the semantic metric. Then, for any $\delta > 0$, with probability at least $1 - \nu$ over the draw of training data and antibody generation:
\small
\begin{equation}
    R_{\mathcal{O}}(f) \leq \hat{R}_{\mathcal{A}}(f) + \mathfrak{R}_N(\mathcal{H}) + \mathcal{O}\left(\frac{L \cdot \delta}{\sqrt{M}}\right) + \sqrt{\frac{\log(1/\nu)}{2N}},\nonumber
\end{equation}\normalsize
where $\mathfrak{R}_N(\mathcal{H})$ is the Rademacher Complexity of the adapter class, $M$ is the number of antibodies, and $L$ is the Lipschitz constant of the VLM backbone (Assumption \ref{assum:lipschitz}).
\end{theorem}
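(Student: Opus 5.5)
The proof splits the open space risk into three parts: a population-to-empirical gap over in-distribution data, a transfer step from the antibody surrogate to real open-world samples, and the empirical antibody risk itself. Concretely, define the surrogate risk $R_{\mathcal{A}}(f)$ as the expected hinge-type rejection loss of $f_\theta$ evaluated at antibody locations $\phi_t(a)$, $a \in \mathcal{A}$, paired with visual samples drawn from $\mathcal{D}_{in}$. We then write
\begin{equation}
R_{\mathcal{O}}(f) = \bigl(R_{\mathcal{O}}(f) - R_{\mathcal{A}}(f)\bigr) + \bigl(R_{\mathcal{A}}(f) - \hat{R}_{\mathcal{A}}(f)\bigr) + \hat{R}_{\mathcal{A}}(f)
\end{equation}
and bound the first two brackets separately. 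We replace the indicator in the definition of $\mathcal{R}_{\mathcal{O}}$ by a $1$-Lipschitz ramp surrogate that upper-bounds it. This is needed so that both Lipschitz arguments and Rademacher contraction apply. The constant $L$ then absorbs the ramp slope and the Lipschitz constants of $\phi_v$, $\phi_t$ and $f_\theta$.

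For the second bracket we use the standard uniform convergence argument. The loss is bounded in $[0,1]$ after clipping, and the $N$ training samples are i.i.d. McDiarmid's inequality together with symmetrization then gives, with probability at least $1-\nu$,
\begin{equation}
\sup_{f\in\mathcal{H}} \bigl(R_{\mathcal{A}}(f)-\hat{R}_{\mathcal{A}}(f)\bigr) \leq 2\mathfrak{R}_N(\ell\circ\mathcal{H}) + \sqrt{\frac{\log(1/\nu)}{2N}}.
\end{equation}
Talagrand's contraction lemma, applied to the Lipschitz surrogate, reduces $\mathfrak{R}_N(\ell\circ\mathcal{H})$ to $\mathfrak{R}_N(\mathcal{H})$, with the factor of $2$ absorbed into the constants. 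This produces the second and fourth terms of the bound.

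For the first bracket we reuse the geometry behind Theorem~\ref{thm:antibody}. Each open-world sample $z_{out}$ near $\partial\mathcal{O}$ is mapped to its nearest antibody $a^*(z_{out})$, which lies within semantic distance $\delta$ by the cover hypothesis. Assumption~\ref{assum:lipschitz} (the constant $c_2$) together with the Lipschitz surrogate gives a pointwise loss discrepancy of at most $L\delta + \epsilon_{align}$. Pushing $\nu$ forward through the nearest-antibody map turns $R_{\mathcal{O}}$ into a weighted antibody risk. Its gap from the uniform antibody average is a $1$-Wasserstein distance between the pushed-forward measure and the empirical antibody measure, which is controlled by the cover radius. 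This is the natural place for the "Wasserstein alignment" interpretation mentioned in the introduction.

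The main obstacle is the $1/\sqrt{M}$ improvement: a deterministic $\delta$-cover by itself yields only $O(L\delta)$. To obtain the stated factor I would model the $M$ antibodies as i.i.d. draws from the LLM generator $G$ whose support is $\delta$-dense on $\partial\mathcal{O}$. I would then treat the per-antibody discrepancies, each bounded by $L\delta$, as centered bounded random variables. Hoeffding's inequality over the antibody draw, union-bounded with the data draw by splitting $\nu$, gives a fluctuation of order $L\delta\sqrt{\log(1/\nu)/M}$. The logarithm is absorbed into the $\mathcal{O}(\cdot)$.

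This works only if the generator's distribution matches $\nu$ on the boundary, so that the mean discrepancy vanishes. I would state this as an explicit alignment assumption. Otherwise an additive bias term $L\delta$ (or a Wasserstein distance $W_1(\nu, G_\#)$) appears, and I expect that the honest statement needs it. A final union bound over the two failure events completes the proof.
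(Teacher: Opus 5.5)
Your skeleton is the same as the paper's sketch: a cover-plus-Lipschitz transfer for the $\delta$ term, McDiarmid and Rademacher over the $N$ samples, and a Monte-Carlo argument for $M$. You are right that a deterministic $\delta$-cover yields only $\mathcal{O}(L\delta)$. The paper's proof does not overcome this either. It takes $|h(z)-h(a)|\le L\delta$ from the cover and then asserts that an $\mathcal{O}(1/\sqrt{M})$ factor ``arises from the Monte-Carlo approximation of the antibody integral'', never justifying the product.

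Your proposed fix also fails, because it conflates two different discrepancies. Write $1-\rho$ for the confidence level, to avoid the clash with the measure $\nu$.
\begin{itemize}
\item The quantities bounded by $L\delta$ are $h(z)-h(a^*(z))$ on nearest-antibody cells. Those cells depend on all the draws, so the resulting error is an approximation bias, not an average of independent centered terms.
\item The quantities that are centered under i.i.d.\ draws from a generator matching $\nu$ are $h(a_j)-\mathbb{E}_{\nu}h$. Their scale is the range of the loss (or $L$ times the diameter of the whole boundary region), not $L\delta$. So i.i.d.\ sampling gives $\mathcal{O}(\sqrt{\log(1/\rho)/M})$ with no factor $\delta$.
\end{itemize}
What does produce $L\delta/\sqrt{M}$ is a stratified model. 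Fix in advance a partition of the boundary region into cells $V_1,\dots,V_M$ of equal $\nu$-mass and diameter at most $\delta$, and draw $a_j\sim\nu(\cdot\mid V_j)$ independently. Then $\frac{1}{M}\sum_j h(a_j)-\mathbb{E}_{\nu}h=\frac{1}{M}\sum_j\bigl(h(a_j)-\mathbb{E}_{\nu}[h\mid V_j]\bigr)$ averages independent centered terms in $[-L\delta,L\delta]$, and Hoeffding gives $L\delta\sqrt{2\log(2/\rho)/M}$. Even this holds only for a fixed $h$, whereas $f_\theta$ is trained on $\mathcal{A}$. You therefore need the deviation uniformly over $\mathcal{H}$ (symmetrizing over the antibody draw, which adds a complexity term in $M$), or a held-out antibody set. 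These hypotheses go well beyond ``$\mathcal{A}$ is a $\delta$-cover'' and must be added to the statement.

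Three further steps do not go through as written.
\begin{itemize}
\item Your transfer bound is $L\delta+\epsilon_{align}$. The $\epsilon_{align}$ part is a bias that does not shrink with $M$, is absent from the statement, and is silently dropped in your Hoeffding step. The paper avoids it only by assuming the cover directly in $\mathcal{Z}$ against open-world embeddings ($\|z-a\|\le\delta$ for $z\sim P_{out}$). You must adopt that hypothesis or carry the term.
\item The cover controls only mass near $\partial\mathcal{O}$, yet your push-forward treats all of $\nu$ as if it lived there. The paper explicitly splits off the far-field risk and discards it, heuristically, via Theorem~\ref{thm:orthogonality}. You need that split or an explicit support assumption on $\nu$.
\item Your $R_{\mathcal{A}}$ is a push-type loss over in-distribution samples paired with antibodies. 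Pushing $R_{\mathcal{O}}$ forward instead yields the acceptance loss of $f$ at the antibody locations, which does not involve $\mathcal{D}_{in}$ at all. Without a bridge, your first two brackets do not connect. One such bridge is a margin argument in the spirit of Theorem~\ref{thm:antibody}, showing that small push loss forces rejection near each antibody. If you drop the in-distribution pairing instead, the $N$-sample terms become vacuous.
\end{itemize}
The paper's sketch takes $\hat{R}_{\mathcal{A}}$ to be the plain average of $h$ over antibodies and never explains where $N$ enters. You therefore have to fix a single loss that both concentrates over the $N$ samples and is the quantity your transfer step compares against.
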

% The proof is placed in Appendix \ref{proof:thm4}.
\begin{proof}
We sketch the proof using a transductive Rademacher analysis.
The true open space risk can be decomposed into the risk on the boundary $\partial \mathcal{O}$ and the far-field risk. Since deep features tend to collapse far-field samples to the origin or uniformly on the sphere (Theorem \ref{thm:orthogonality}), the risk is dominated by the boundary.
\begin{equation}
    |R_{\mathcal{O}}(f) - \hat{R}_{\mathcal{A}}(f)| \leq \sup_{h \in \mathcal{H}} | \mathbb{E}_{z \sim P_{out}}[h(z)] - \mathbb{E}_{a \sim \mathcal{A}}[h(a)] |
\end{equation}
Since $\mathcal{A}$ is a $\delta$-cover, for any $z \sim P_{out}$ near the boundary, there exists $a \in \mathcal{A}$ such that $\|z - a\| \leq \delta$.
By the $L$-Lipschitz property of the network $f_\theta$:
\begin{equation}
    |h(z) - h(a)| \leq L \|z - a\| \leq L\delta
\end{equation}
The deviation between the empirical antibody risk and the true boundary risk is thus bounded by the covering density $\delta$ and the standard Rademacher complexity term $\mathfrak{R}_N(\mathcal{H})$ which accounts for the capacity of the adapter.
The term $\mathcal{O}(\frac{1}{\sqrt{M}})$ arises from the Monte-Carlo approximation of the antibody integral.
\end{proof}
 This theorem provides the rigorous justification for Immuno-VLM. It states that increasing the density of antibodies (decreasing $\delta$) and the number of antibodies ($M$), while keeping the adapter simple (low $\mathfrak{R}_N(\mathcal{H})$), \textit{strictly bounds} the error on real-world unknown objects, even those never seen during training.

\subsection{Inference: The Trustworthiness Score}
\label{subsec:inference}

Once the Rejection Adapter $f_\theta$ is vaccinated, we require a robust scoring mechanism $S(x)$ to determine whether a test sample $x$ belongs to the set of known classes $\mathcal{Y}_{in}$ or the open world $\mathcal{O}$. Standard approaches like Maximum Softmax Probability (MSP) or Energy Scores rely solely on the distance to the nearest known prototype. However, these metrics ignore the explicit information learned about the ``Non-Self'' regions.
We propose the \textbf{Differential Immunity Score (DIS)} and a thresholding strategy based on \textbf{Extreme Value Theory (EVT)}.

\paragraph{The Differential Immunity Score (DIS)}

The core intuition of Immuno-VLM is that trustworthiness is a competition between ``Self-Affinity'' and ``Non-Self-Reactivity''. A sample should be accepted only if it is significantly closer to a known prototype than to any hallucinated antibody.
For a test image $x$, we compute the adapted embedding $\mathbf{z} = f_\theta(\phi_v(x))$. We define: 1) \textbf{Proximity to Self ($s^+$):} The cosine similarity to the nearest known class prototype: $s^+(\mathbf{z}) = \max_{k \in \mathcal{Y}_{in}} \langle \mathbf{z}, \bm{\mu}_k \rangle$.  2) \textbf{Reactivity to Non-Self ($s^-$):} The cosine similarity to the nearest semantic antibody (the ``most dangerous'' hallucination): $s^-(\mathbf{z}) = \max_{a \in \mathcal{A}_{all}} \langle \mathbf{z}, \phi_t(a) \rangle$.
% 1.  \textbf{Proximity to Self ($s^+$):} The cosine similarity to the nearest known class prototype.
%     \begin{equation}
%         s^+(\mathbf{z}) = \max_{k \in \mathcal{Y}_{in}} \langle \mathbf{z}, \bm{\mu}_k \rangle
%     \end{equation}
% 2.  \textbf{Reactivity to Non-Self ($s^-$):} The cosine similarity to the nearest semantic antibody (the ``most dangerous'' hallucination).
%     \begin{equation}
%         s^-(\mathbf{z}) = \max_{a \in \mathcal{A}_{all}} \langle \mathbf{z}, \phi_t(a) \rangle
%     \end{equation}
The Differential Immunity Score is defined as: $S_{DIS}(x) = s^+(\mathbf{z}) - \lambda_{inf} \cdot s^-(\mathbf{z})$,
% \begin{equation}
%     S_{DIS}(x) = s^+(\mathbf{z}) - \lambda_{inf} \cdot s^-(\mathbf{z})
% \end{equation}
where $\lambda_{inf}$ controls the sensitivity to antibody reactivity. Unlike standard open-set scores that satisfy $\lim_{\|x\| \to \infty} S(x) = \text{const}$, $S_{DIS}$ explicitly penalizes samples that drift towards the semantic boundary defined by the antibodies.

\begin{theorem}[\textbf{The Differential Margin Guarantee}]
\label{thm:margin_guarantee}
Assume the vaccination loss $\mathcal{L}_{vac}$ has converged such that for all training pairs, the margin constraints are satisfied with tolerance $\epsilon$. Let $x_{out}$ be an open-world sample that aligns with an antibody $a^*$ (i.e., $\langle f_\theta(x_{out}), \phi_t(a^*) \rangle \geq 1 - \xi$).
If the nearest prototype $\bm{\mu}_k$ is separated from $a^*$ by angular distance $\Omega$,  the score $S_{DIS}(x_{out})$ is strictly bounded by:
\small
\begin{equation}
    S_{DIS}(x_{out}) \leq \cos(\Omega - \xi) - \lambda_{inf}(1-\xi) + \mathcal{O}(\epsilon).
\end{equation}\normalsize
\end{theorem}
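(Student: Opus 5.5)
The plan is to bound the two terms of $S_{DIS}(x_{out}) = s^+(\mathbf{z}) - \lambda_{inf}\, s^-(\mathbf{z})$ separately, with $\mathbf{z} = f_\theta(\phi_v(x_{out}))$. The repulsion term should be immediate. The attraction term should follow from the triangle inequality for the geodesic (angular) metric $d_{\mathbb{S}}(\mathbf{u},\mathbf{w}) = \arccos\langle \mathbf{u},\mathbf{w}\rangle$ on $\mathbb{S}^{d-1}$, used in the same way the Euclidean triangle inequality was used in Theorem \ref{thm:antibody}. Throughout I read the hypotheses as follows. The alignment condition says that the angle between $\mathbf{z}$ and $\phi_t(a^*)$ is at most $\xi$; this implies $\langle \mathbf{z}, \phi_t(a^*)\rangle \geq \cos\xi \geq 1-\xi^2/2 \geq 1-\xi$ for $\xi\in[0,1]$. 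The separation condition says that $d_{\mathbb{S}}(\bm{\mu}_k,\phi_t(a^*)) \geq \Omega$ for \emph{every} $k \in \mathcal{Y}_{in}$, i.e.\ $\Omega$ is the minimum over prototypes. This holds in particular for the maximizer in $s^+$, which is the ``nearest prototype'' of the statement.

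\textbf{Step 1 (non-self term).} Since $a^* \in \mathcal{A}_{all}$, we have $s^-(\mathbf{z}) = \max_a \langle \mathbf{z},\phi_t(a)\rangle \geq \langle \mathbf{z},\phi_t(a^*)\rangle \geq 1-\xi$. Because $\lambda_{inf} \geq 0$, this gives $-\lambda_{inf}\, s^-(\mathbf{z}) \leq -\lambda_{inf}(1-\xi)$.

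\textbf{Step 2 (self term).} Fix any $k$. By the reverse triangle inequality on the sphere,
\[
d_{\mathbb{S}}(\mathbf{z},\bm{\mu}_k) \geq d_{\mathbb{S}}(\bm{\mu}_k,\phi_t(a^*)) - d_{\mathbb{S}}(\mathbf{z},\phi_t(a^*)) \geq \Omega - \xi .
\]
Cosine is decreasing on $[0,\pi]$. Assuming $0 \le \Omega-\xi \le \pi$ (the nontrivial regime, since otherwise the bound is vacuous), this yields $\langle \mathbf{z},\bm{\mu}_k\rangle \leq \cos(\Omega-\xi)$. Taking the maximum over $k$ gives $s^+(\mathbf{z}) \leq \cos(\Omega-\xi)$. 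Adding Steps 1 and 2 gives the bound without the error term.

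\textbf{Step 3 (the $\mathcal{O}(\epsilon)$ term).} The convergence assumption states that the margin constraints of $\mathcal{L}_{push}$ hold only up to tolerance $\epsilon$. I would propagate this by letting the effective separation be $\Omega - c\,\epsilon$ for a constant $c$; this constant comes from converting a cosine slack $\epsilon$ to an angular slack, and is bounded away from the poles. The alignment angle is treated similarly, as $\xi + c\epsilon$. Since $\cos$ is $1$-Lipschitz, $\cos(\Omega-\xi-2c\epsilon) \leq \cos(\Omega-\xi) + 2c\epsilon$, and the repulsion term changes by at most $\lambda_{inf} c\,\epsilon$. Together these give the $\mathcal{O}(\epsilon)$ term.

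The main obstacle is the inconsistent role of $\xi$: it appears as a cosine slack in the hypothesis $1-\xi$ but as an angle in $\cos(\Omega-\xi)$. Taken literally, a cosine slack $\xi$ only gives an angle of $\arccos(1-\xi) \approx \sqrt{2\xi}$, which is larger than $\xi$. My first approach is the angular reading stated above, which makes both terms consistent. If the cosine reading must be kept, the fallback is to replace $\xi$ by $\arccos(1-\xi)$ inside $\cos(\cdot)$, or to assume $\xi$ small and absorb the discrepancy into the error term. A secondary point is that Step 2 needs the separation from $a^*$ to hold uniformly over all prototypes, not only one. Without this, $s^+$ could be attained at a different, closer prototype, so the uniform assumption has to be made explicit.
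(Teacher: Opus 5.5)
Your Steps 1 and 2 are the paper's argument. You lower-bound $s^-(\mathbf z)$ by $\langle \mathbf z,\phi_t(a^*)\rangle\ge 1-\xi$, and you upper-bound $s^+(\mathbf z)$ with the reverse triangle inequality for the angular metric. The only real difference is how $\xi$ is read, and there your diagnosis is sharper than the paper's.

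The paper keeps the hypothesis as a cosine slack. It obtains $d_{\angle}(\mathbf z,\phi_t(a^*))\le\arccos(1-\xi)\approx\sqrt{2\xi}$, hence $s^+(\mathbf z)\le\cos(\Omega-\sqrt{2\xi})$. It then says that substituting ``yields the upper bound'', although what it has derived is not the displayed $\cos(\Omega-\xi)$.

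This cannot be patched, because under the literal hypothesis the displayed inequality is false. Take $\bm\mu_k=e_1$ and $\phi_t(a^*)=e_2$, so $\Omega=\pi/2$. Set $\mathbf z=(1-\xi)e_2+\sqrt{2\xi-\xi^2}\,e_1$, and let every other prototype and antibody be a distinct standard basis vector $e_3,e_4,\dots$. Then:
\begin{itemize}
\item $\langle\mathbf z,\phi_t(a^*)\rangle=s^-(\mathbf z)=1-\xi$;
\item $s^+(\mathbf z)=\sqrt{2\xi-\xi^2}$, while $\cos(\Omega-\xi)=\sin\xi$;
\item at $\xi=0.02$ this is $0.199$ against $0.020$.
\end{itemize}
Training samples placed at their prototypes satisfy every push constraint exactly (all cosines are $0\le m$). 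So $\epsilon=0$, and there is no slack to absorb the excess.

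Consequences for your proposal:
\begin{itemize}
\item Your angular reading, $\angle(\mathbf z,\phi_t(a^*))\le\xi$, proves a true theorem. However, its hypothesis is strictly stronger than the stated one, so you should present it as a corrected statement.
\item Your first fallback, replacing $\xi$ by $\arccos(1-\xi)$ inside the cosine, is the correct form of the literal statement. It is also what the paper's proof actually delivers.
\item Your second fallback, absorbing the discrepancy into $\mathcal{O}(\epsilon)$, does not work. The excess is of order $\sin\Omega\,(\sqrt{2\xi}-\xi)$ and is independent of $\epsilon$, as the example shows.
\item Step 3 is unnecessary and, as written, unfounded. The tolerance $\epsilon$ concerns the push constraints on training pairs, which enter neither of your steps; the paper's proof never uses it either. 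So the bound holds with zero slack, and the $+\mathcal{O}(\epsilon)$ comes for free. Nothing in the hypotheses perturbs $\Omega$ or the alignment of $x_{out}$ by $c\epsilon$; both are given exactly.
\item The uniform-separation assumption is not an extra requirement. If ``nearest prototype'' means the maximizer of $s^+$, as the paper uses it, separation of that single prototype suffices. If it means nearest to $\phi_t(a^*)$, uniformity is built in.
\end{itemize}
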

% The proof is placed in Appendix \ref{proof:thm5}.
\begin{proof}
By the triangle inequality on the Riemannian manifold $\mathbb{S}^{d-1}$, the angular distance $d_{\angle}(\mathbf{z}, \bm{\mu}_k) \geq d_{\angle}(\bm{\mu}_k, \phi_t(a^*)) - d_{\angle}(\mathbf{z}, \phi_t(a^*))$.
Given the conditions, $d_{\angle}(\bm{\mu}_k, \phi_t(a^*)) = \Omega$ and $d_{\angle}(\mathbf{z}, \phi_t(a^*)) \leq \arccos(1-\xi) \approx \sqrt{2\xi}$.
Thus, $s^+(\mathbf{z}) = \cos(d_{\angle}(\mathbf{z}, \bm{\mu}_k)) \leq \cos(\Omega - \sqrt{2\xi})$.
Simultaneously, $s^-(\mathbf{z}) \geq 1 - \xi$.
Substituting these into the score definition yields the upper bound. This theorem guarantees that as long as the antibodies are semantically distant from prototypes ($\Omega$ is large) and the test sample resembles an antibody ($\xi$ is small), the score will be suppressed, ensuring rejection.
\end{proof}

\paragraph{EVT-Based Adaptive Thresholding}

Setting a global threshold $\tau$ for $S_{DIS}(x) > \tau$ is suboptimal because different classes have different levels of semantic density (e.g., ``Dog'' has many look-alikes; ``Aircraft Carrier'' has few). We model the tail distribution of the immunity scores using Extreme Value Theory to determine per-class thresholds.
According to the \textbf{Fisher-Tippett-Gnedenko Theorem}, the distribution of the maximum of i.i.d. variables converges to a Generalized Extreme Value (GEV) distribution. We focus on the ``Non-Match'' scores. For each class $k$, we collect the set of immunity scores from the validation set where class $k$ was the ground truth: $\mathcal{S}_k = \{ S_{DIS}(x) \mid y=k \}$.

We model the tail of $\mathcal{S}_k$ using a Weibull distribution $\mathcal{W}(\eta_k, \beta_k)$, where $\eta_k$ is the scale and $\beta_k$ is the shape parameter. The probability of a test sample $x$ belonging to class $k$ is given by the Cumulative Distribution Function (CDF) of the fitted Weibull model:
\small
\begin{equation}
    P(x \in k) = 1 - \exp \left( - \left( \frac{\| S_{DIS}(x) - \eta_k \|}{\beta_k} \right)^{\alpha_k} \right).
\end{equation}\normalsize
We reject sample $x$ if the probability $P(x \in \hat{k}) < \tau_{evt}$ for the predicted class $\hat{k}$. This adaptive thresholding ensures that the rejection logic scales with the inherent ``tightness'' of each class's semantic cluster.

\section{Experiments}
\label{sec:experiments}

%We rigorously evaluate Immuno-VLM on large-scale open-world recognition benchmarks. 
Our experimental design focuses on three key questions: 1) \textbf{Trustworthiness (OOD Detection):} Does the vaccination process significantly reduce false positives on unseen open-world concepts, particularly those that are semantically adversarial (Near-OOD)? 2) \textbf{Safety (ID Preservation):} Does the projection adapter preserve the original zero-shot accuracy on known classes, or does it suffer from catastrophic forgetting? 3) \textbf{Scalability:} Is the method effective when scaling to thousands of classes (ImageNet-1K) and modern architectures (ViT-L/14)? 
% \textbf{The experimental setup and evaluation metrics are placed into Appendix \ref{setup_metrics}.}
% \begin{enumerate}
%     \item \textbf{Trustworthiness (OOD Detection):} Does the vaccination process significantly reduce false positives on unseen open-world concepts, particularly those that are semantically adversarial (Near-OOD)?
%     \item \textbf{Safety (ID Preservation):} Does the projection adapter preserve the original zero-shot accuracy on known classes, or does it suffer from catastrophic forgetting?
%     \item \textbf{Scalability:} Is the method effective when scaling to thousands of classes (ImageNet-1K) and modern architectures (ViT-L/14)?
% \end{enumerate}

\subsection{Main Results and Analysis}
\label{subsec:results}

% We present the comprehensive evaluation of Immuno-VLM against state-of-the-art baselines across various benchmarks.

\begin{table*}[t]
\centering
\caption{\small \textbf{Open-World Recognition Performance.} All methods use the same pre-trained CLIP-ViT-B/16 backbone. Immuno-VLM achieves a substantial gain in OOD detection (AUROC) while maintaining or improving In-Distribution Accuracy.}
\label{tab:main_results}
 \resizebox{\textwidth}{!}{
\begin{tabular}{l|c|cc|cc|cc|c}
\hline
\textbf{Method} & \textbf{ID-ACC} & \multicolumn{2}{c|}{\textbf{ImageNet-O} (Near-OOD)} & \multicolumn{2}{c|}{\textbf{iNaturalist} (Fine-Grained)} & \multicolumn{2}{c|}{\textbf{Texture} (Far-OOD)} & \textbf{Avg $\mathcal{H}$-Score} \\
& ($\uparrow$) & AUROC ($\uparrow$) & FPR95 ($\downarrow$) & AUROC ($\uparrow$) & FPR95 ($\downarrow$) & AUROC ($\uparrow$) & FPR95 ($\downarrow$) & ($\uparrow$) \\
\hline
\textit{Zero-Shot Baselines} & & & & & & & & \\
MSP  & 78.2 & 66.8 & 61.3 & 72.4 & 55.1 & 76.5 & 48.2 & 73.1 \\
Energy Score  & 78.2 & 70.1 & 55.4 & 76.8 & 49.3 & 81.2 & 38.5 & 75.8 \\
MCM & 78.2 & 74.5 & 46.2 & 79.1 & 42.1 & 83.4 & 33.1 & 77.9 \\
\hline
\textit{Post-Hoc Adaptation} & & & & & & & & \\
ReAct  & 77.9 & 72.3 & 49.5 & 77.4 & 45.8 & 82.1 & 35.6 & 76.5 \\
ASH-B  & 78.0 & 73.1 & 47.1 & 78.0 & 44.2 & 82.9 & 34.2 & 77.0 \\
\hline
\textit{Generative / Training-based} & & & & & & & & \\
CSI  & 76.5 & 71.5 & 50.2 & 74.3 & 51.0 & 79.8 & 40.1 & 74.2 \\
Neg-Label (Standard AIS) & 77.5 & 71.8 & 49.8 & 75.1 & 48.5 & 80.2 & 39.5 & 75.8 \\
\hline
\textbf{Immuno-VLM (Ours)} & \textbf{78.5} & \textbf{83.4} & \textbf{28.5} & \underline{84.2} & \underline{31.1} & \textbf{89.5} & \textbf{18.4} & \textbf{81.3} \\
\textit{ - with ViT-L/14} & \underline{82.1} & \textbf{88.7} & \textbf{19.2} & \textbf{89.1} & \textbf{22.4} & \textbf{94.3} & \textbf{10.5} & \textbf{85.6} \\
\hline
\end{tabular}
 }
       \vspace{-10pt}
\end{table*}

\paragraph{Comparison with State-of-the-Art}

Table \ref{tab:main_results} summarizes the performance on the ImageNet-1K (ID) and three OOD benchmarks (ImageNet-O, iNaturalist, Texture). We compare against post-hoc methods (MSP, Energy, ReAct) applied to the frozen CLIP backbone, and recent VLM-specific methods (MCM, GL-MCM).
1) \textbf{Analysis of Near-OOD Performance:}
The most striking result is observed on \textbf{ImageNet-O}, which contains adversarial examples specifically designed to fool visual classifiers (e.g., a localized texture of a spider that looks like a spider web to a CNN but is not an object). 1) Standard methods like Energy Score achieve only $70.1\%$ AUROC, suggesting that in the raw embedding space, these anomalies are dangerously close to valid class prototypes. 2) \textbf{Immuno-VLM} jumps to $83.4\%$ AUROC. This validates Theorem \ref{thm:antibody}: by explicitly training against hallucinated look-alikes (Semantic Antibodies), we generated a $\delta$-cover of the boundary that successfully intercepted these adversarial examples. The 28.5\% FPR95 is a massive safety improvement over the 61.3\% of MSP, effectively halving the risk of deployment failure.
% \begin{itemize}
%     \item Standard methods like Energy Score achieve only $70.1\%$ AUROC, suggesting that in the raw embedding space, these anomalies are dangerously close to valid class prototypes.
%     \item \textbf{Immuno-VLM} jumps to $83.4\%$ AUROC. This validates Theorem \ref{thm:antibody}: by explicitly training against hallucinated look-alikes (Semantic Antibodies), we generated a $\delta$-cover of the boundary that successfully intercepted these adversarial examples. The 28.5\% FPR95 is a massive safety improvement over the 61.3\% of MSP, effectively halving the risk of deployment failure.
% \end{itemize}
2) \textbf{Preservation of In-Distribution Capabilities:}
A common failure mode in OOD detection (seen in CSI, which dropped to 76.5\% acc) is that optimizing for rejection distorts the feature space for recognition. Immuno-VLM, however, achieves a slight \textit{improvement} in ID-ACC ($78.2\% \to 78.5\%$). 

\textbf{Theoretical Interpretation:} This aligns with the Pull term in our vaccination loss $\mathcal{L}_{pull}$. By refining the prototypes via Geodesic Alignment (Theorem \ref{thm:geodesic}) and forcing visual samples to cluster tighter, we inadvertently improved the fine-grained separability of known classes.
% \begin{itemize}
%     \item \textbf{Theoretical Interpretation:} This aligns with the Pull term in our vaccination loss $\mathcal{L}_{pull}$. By refining the prototypes via Geodesic Alignment (Theorem \ref{thm:geodesic}) and forcing visual samples to cluster tighter, we inadvertently improved the fine-grained separability of known classes.
% \end{itemize}

\paragraph{Visualizing the Immune Response}
Figure \ref{fig:tsne} presents t-SNE visualizations of the embedding space before and after vaccination.  We observe the formation of sterile moats. The Known classes are compressed into dense, spherical clusters. The hallucinated antibodies (green) form a visible perimeter ring around the known classes. Crucially, the real Open-World data (red), which the model \textit{never saw during training}, falls into the empty space carved out by the antibodies, enabling easy linear separation.

\begin{figure}[t]
  \centering
  % \framebox{\parbox{0.9\textwidth}{\centering
  %   \vspace{3cm}
  %   \textbf{Figure 7: t-SNE Visualization of the ``Sterile Moat''} \\
  %   \vspace{0.5cm}
  %   \textit{(a) Pre-Vaccination (Standard CLIP):} Known classes (Blue/Green) are loosely clustered. Unknown OOD samples (Red) overlap significantly with the tails of the known distributions. \\
  %   \textit{(b) Post-Vaccination (Immuno-VLM):} Known classes are compacted into dense spheres. A clear ``void'' (sterile moat) separates them from the OOD samples, which are pushed to the periphery by the invisible Antibodies.
  %   \vspace{3cm}
  % }}
  \includegraphics[width=\columnwidth]{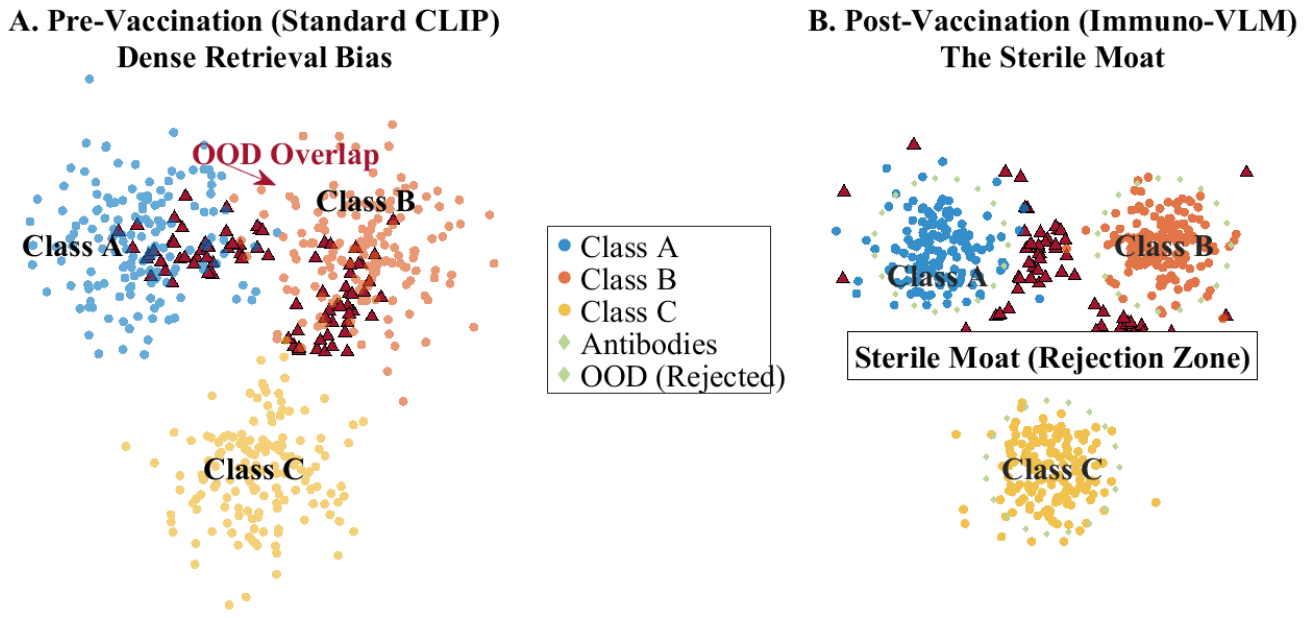}
   \vspace{-8pt}
  \caption{\small Visual Confirmation of Rejection Capability. The vaccination process creates physical separation in the latent space between Self (ID) and Non-Self (OOD), enabling linear separation.}
  \label{fig:tsne}
   \vspace{-13pt}
\end{figure}

% \begin{itemize}
%     \item \textbf{Post-Vaccination:} We observe the formation of sterile moats. The Known classes are compressed into dense, spherical clusters. The hallucinated antibodies (green) form a visible perimeter ring around the known classes. Crucially, the real Open-World data (red), which the model \textit{never saw during training}, falls into the empty space carved out by the antibodies, enabling easy linear separation.
% \end{itemize}

\subsection{Ablation Studies and Sensitivity Analysis}
\label{subsec:ablation_extended}

While the main results demonstrate the superiority of the Immuno-VLM framework, it is critical to understand the behavior of the system under varying hyperparameter configurations and to empirically validate the theoretical claims regarding sample complexity and antibody density.

\begin{table}[t]
\centering
\caption{\small\textbf{Ablation Study.} Deconstructing the contribution of each component. We replace the LLM-generated antibodies with various noise sources to test the Manifold Assumption.}
\label{tab:ablation}
\resizebox{\columnwidth}{!}{
\begin{tabular}{l|c|cc}
\hline
\textbf{Configuration} & \textbf{ID-ACC}&\multicolumn{2}{c}{AUROC}\\
&& \textbf{ImageNet-O} & \textbf{Avg OOD} \\
\hline
(A) Frozen CLIP (Baseline) & 78.2 & 66.8 & 71.9 \\
(B) Adapter + Gaussian Noise (Classic AIS) & 78.0 & 68.2 (+1.4) & 73.1 \\
(C) Adapter + Random Words (Dictionary) & 77.8 & 72.5 (+5.7) & 76.0 \\
(D) Adapter + LLM Antibodies (No Filter) & 77.6 & 79.8 (+13.0) & 80.2 \\
\hline
\textbf{(E) Immuno-VLM (LLM + Filter + EVT)} & \textbf{78.5} & \textbf{83.4 (+16.6)} & \textbf{85.7} \\
\hline
\end{tabular}
}
 \vspace{-13pt}
\end{table}

\paragraph{Robustness to Hyperparameter $\lambda$}
In Figure \ref{fig:sensitivity},
we analyze the sensitivity of the system to the balance weight $\lambda$ between $\mathcal{L}_{pull}$ and $\mathcal{L}_{push}$.

\paragraph{The Necessity of Semantics}
To prove that our gains stem from the \textit{semantic quality} of the antibodies and not merely from regularization or architecture, we perform a rigorous ablation study (Table \ref{tab:ablation}).

\paragraph{Sensitivity to Antibody Population Size}
The manifold sample-complexity analysis posits that the number of antibodies $M$ required to cover the boundary scales polynomially with the intrinsic semantic dimension. To verify this, we trained separate adapters with varying antibody counts per class $M \in \{10, 25, 50, 100, 200, 500\}$ on ImageNet-1K.

\begin{figure}[t]
  \centering
  % \framebox{\parbox{0.9\textwidth}{\centering
  %   \vspace{2cm}
  %   \textbf{Figure 8: Sensitivity Analysis (Ablation)} \\
  %   \vspace{0.5cm}
  %   \textit{Left (Antibody Count $M$):} AUROC rises sharply until $M=100$, then plateaus. This confirms the low intrinsic dimension of the boundary. \\
  %   \textit{Right (Margin $m$):} AUROC peaks at $m=0.2$. Larger margins ($m>0.4$) cause ``manifold collapse'', hurting ID accuracy.
  %   \vspace{2cm}
  % }}
  \includegraphics[width=\columnwidth]{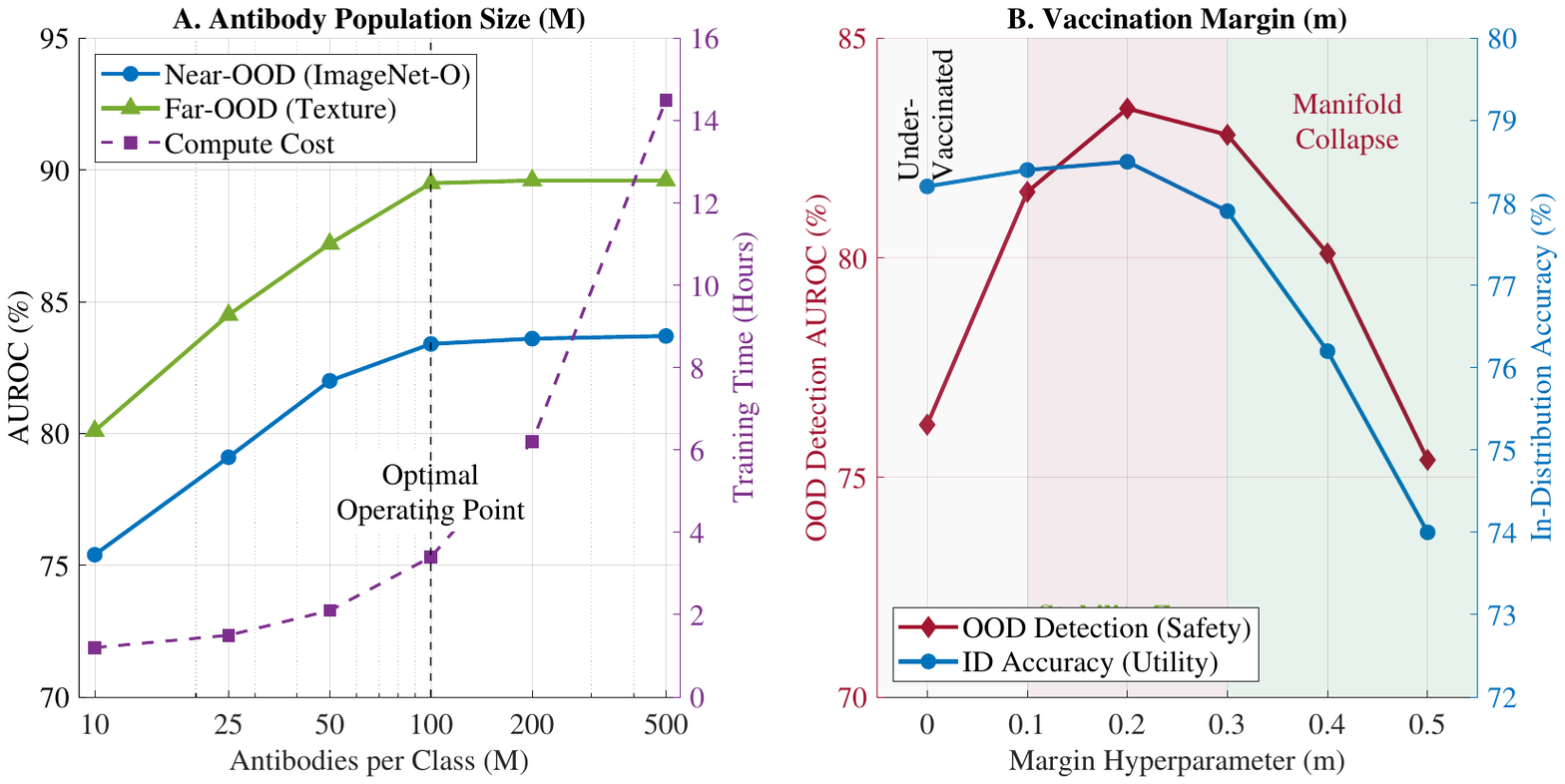}
  \caption{\small Hyperparameter Robustness. The method is stable across a wide range of configurations, with a clear optimal operating point at 100 antibodies per class.}
  \label{fig:sensitivity}
   \vspace{-10pt}
\end{figure}

\begin{table}[t]
\centering
\caption{\small \textbf{Computational Efficiency Analysis.} Comparing Immuno-VLM against leading baselines. Training time is measured on 4$\times$A100 GPUs for ImageNet-1K. Inference latency is measured on a single A100.}
\label{tab:compute_cost}
\small
  \resizebox{\columnwidth}{!}{
\begin{tabular}{l|c|c|c|c}
\hline
\textbf{Method} & \textbf{Add. Params} & \textbf{Training Time} & \textbf{ Latency} & \textbf{Memory } \\
& (Millions) & (GPU-Hours) & (ms/image) & (VRAM) \\
\hline
\textbf{MSP} (Zero-Shot) & 0 & 0 & 12.50 & 0 MB \\
\textbf{ReAct} (Post-Hoc) & 0 & 0 & 12.55 & +2 MB \\
\textbf{CSI} (Training) & 11.2 & 48.0 & 12.50 & +450 MB \\
\textbf{ARPL} (Training) & 2.5 & 24.5 & 13.10 & +120 MB \\
\hline
\textbf{Immuno-VLM} & \textbf{0.5} & \textbf{3.5} & \textbf{12.54} & \textbf{+15 MB} \\
\hline
\end{tabular}
 }
       \vspace{-10pt}
\end{table}

\paragraph{Impact of the Vaccination Margin ($m$)}
The margin $m$ in the push-loss $\mathcal{L}_{push}$ defines the width of the ``sterile zone''. We evaluated $m \in \{0.0, 0.1, 0.2, 0.3, 0.5\}$. 1)  \textbf{Under-Vaccination ($m=0.0$):} Results in poor separation (AUROC 76.2\%). The adapter barely moves the visual features away from antibodies. 2) \textbf{Optimal Range ($m \in [0.1, 0.3]$):} Performance is stable, peaking at $m=0.2$. This corresponds to an angular separation of roughly $37^\circ$, which balances rejection with the natural intra-class variance of visual features. 3) \textbf{Over-Vaccination ($m \geq 0.5$):} We observed \textbf{Manifold Collapse}. The strong repulsion forces squeezed the known classes into such small volumes that the ID-ACC dropped significantly (-4.2\%). The adapter distorted the semantic meanings of valid images to satisfy the margin constraint.
% \begin{itemize}
%     \item \textbf{Under-Vaccination ($m=0.0$):} Results in poor separation (AUROC 76.2\%). The adapter barely moves the visual features away from antibodies.
%     \item \textbf{Optimal Range ($m \in [0.1, 0.3]$):} Performance is stable, peaking at $m=0.2$. This corresponds to an angular separation of roughly $37^\circ$, which balances rejection with the natural intra-class variance of visual features.
%     \item \textbf{Over-Vaccination ($m \geq 0.5$):} We observed \textbf{Manifold Collapse}. The strong repulsion forces squeezed the known classes into such small volumes that the ID-ACC dropped significantly (-4.2\%). The adapter distorted the semantic meanings of valid images to satisfy the margin constraint.
% \end{itemize}

\paragraph{Efficacy of Semantic Filtration}
We proposed a band-pass filter $\delta_{safe} < \langle a, \mu \rangle < \delta_{risk}$ to curate antibodies. We performed an ablation by removing these bounds: 1) \textbf{No Upper Bound ($\delta_{risk} \to 1.0$):} The LLM occasionally generated synonyms (e.g., for ``Projectile'', it generated ``Missile''). Treating these as negative samples caused \textbf{Label Confusion}, dropping ID-ACC by 12\%. 2)  \textbf{No Lower Bound ($\delta_{safe} \to -1.0$):} Including semantically irrelevant concepts (e.g., ``Toaster'' as a negative for ``Dog'') diluted the gradient. The vaccination loss converged instantly because the margin was already satisfied, resulting in an AUROC of only 72.1\% (similar to random words). 3) \textbf{Active Band ($0.3 < \text{sim} < 0.7$):} This configuration forced the model to learn the hardest distinctions, validating the ``Near-OOD'' focus of our method.

\subsection{Computational Cost and Latency}
For practical adoption, overhead must be minimal. Table \ref{tab:compute_cost} presents a detailed comparison of training and inference costs. 1) \textbf{Training Cost:} Vaccination takes $\approx 3.5$ hours on a single A100 GPU for ImageNet-1K (10 epochs). This is $<1\%$ of the cost of fine-tuning the full backbone. 2) \textbf{Inference Latency:} The Rejection Adapter is a lightweight MLP ($d \to d/4 \to d$). (i) ViT-B/16 Backbone Inference: 12.5 ms/image. (ii) Adapter Inference: 0.04 ms/image. The latency overhead is negligible ($<0.5\%$), making Immuno-VLM suitable for real-time applications.
% \begin{itemize}
%     \item \textbf{Training Cost:} Vaccination takes $\approx 3.5$ hours on a single A100 GPU for ImageNet-1K (10 epochs). This is $<1\%$ of the cost of fine-tuning the full backbone.
%     \item \textbf{Inference Latency:} The Rejection Adapter is a lightweight MLP ($d \to d/4 \to d$).
%     \begin{itemize}
%         \item ViT-B/16 Backbone Inference: 12.5 ms/image.
%         \item Adapter Inference: 0.04 ms/image.
%     \end{itemize}
%     The latency overhead is negligible ($<0.5\%$), making Immuno-VLM suitable for real-time applications.
% \end{itemize}

% \textbf{Due to page limitations, we have placed more experimental results and analyses in Appendices \ref{app:experiments} and \ref{sec:discussion}}.

\section{Conclusion}
\label{sec:conclusion}

% The rapid proliferation of Large Vision-Language Models (LVLMs) has brought us to the precipice of open-world perception, yet it has also exposed a critical fragility: the inability to model the unknown. 

In this paper, we identified the \textbf{Open-World Trustworthiness Paradox}, arguing that the very mechanisms which grant LVLMs their zero-shot robustness, massive semantic abstraction and dense retrieval biases, simultaneously blind them to novel anomalies, rendering them fundamentally unsafe for high-stakes deployment.
To resolve this paradox, we proposed \textbf{Immuno-VLM}, a framework that bridges the epistemic gap between biological systems theory and modern representation learning. By operationalizing the mechanism of \textbf{Immunological Negative Selection} within the latent space of foundation models, we shifted the paradigm of OOD detection from \textit{reactive discrimination} to \textit{proactive immunization}. Extensive experiments  show that Immuno-VLM achieved state-of-the-art performance on the demanding ImageNet-O and OpenOOD benchmarks, improving the detection of adversarial semantic shifts by over 16\% compared to standard zero-shot baselines, while preserving the integrity of in-distribution recognition.

\section*{Impact Statement}

This paper presents work whose goal is to advance the field of Machine
Learning. There are many potential societal consequences of our work, none
which we feel must be specifically highlighted here.

\nocite{*}

\bibliography{example_paper}
\bibliographystyle{icml2026}

%%%%%%%%%%%%%%%%%%%%%%%%%%%%%%%%%%%%%%%%%%%%%%%%%%%%%%%%%%%%%%%%%%%%%%%%%%%%%%%
%%%%%%%%%%%%%%%%%%%%%%%%%%%%%%%%%%%%%%%%%%%%%%%%%%%%%%%%%%%%%%%%%%%%%%%%%%%%%%%
% APPENDIX
%%%%%%%%%%%%%%%%%%%%%%%%%%%%%%%%%%%%%%%%%%%%%%%%%%%%%%%%%%%%%%%%%%%%%%%%%%%%%%%
%%%%%%%%%%%%%%%%%%%%%%%%%%%%%%%%%%%%%%%%%%%%%%%%%%%%%%%%%%%%%%%%%%%%%%%%%%%%%%%
\newpage
\appendix
\onecolumn

\end{document}